\definecolor{qqqqcc}{rgb}{0.,0.,0.8}
\definecolor{zzqqtt}{rgb}{0.6,0.,0.2}
\definecolor{eqbqff}{rgb}{0.8784313725490196,0.6901960784313725,1.}
\definecolor{ududff}{rgb}{0.30196078431372547,0.30196078431372547,1.}
\newcommand{\R}{\mathbb{R}}
\newcommand{\reals}{\R}
\newcommand{\rationals}{\mathbb{Q}}
\newcommand{\N}{\mathbb{N}}
\newcommand{\abs}[1]{ \left\vert #1\right\vert }
\newcommand{\set}[1]{\left\{#1\right\}}
\newcommand{\eval}[2]{\underset{{#1}}{\mathbb{E}}\left[#2\right]}
\renewcommand{\Pr}[2]{\underset{{#1}}{\mathbb{P}}\left(#2\right)}
\newcommand{\roblossc}{\mathsf{R}^C_\rho}
\newcommand{\roblosse}{\mathsf{R}^E_\rho}
\newcommand{\boolhc}{\set{0,1}^n}
\newcommand{\boolhcprime}{\set{0,1}^{(2k+1)n+1}}
\newcommand{\poly}{\text{poly}}
\newcommand{\supp}{\text{supp}}
\newcommand{\maj}{\text{maj}}
\newcommand{\given}{\;|\;}
\newcommand{\A}{\mathcal{A}}
\newcommand{\C}{\mathcal{C}}
\newcommand{\Cprime}{\mathcal{C}'_{(k,n)}}
\newcommand{\D}{\mathcal{D}}
\newcommand{\Dprime}{\mathcal{D}'_{(k,n)}}
\newcommand{\E}{\mathcal{E}}
\newcommand{\F}{\mathcal{F}}
\renewcommand{\H}{\mathcal{H}}
\newcommand{\X}{\mathcal{X}}
\newcommand{\Xprime}{\X'_{k,n}}
\newcommand{\Y}{\mathcal{Y}}
\newcommand{\MonConj}{\textsf{MON-CONJ }}
\newtheorem{theorem}{Theorem}
\newtheorem{proposition}[theorem]{Proposition}
\newtheorem{lemma}[theorem]{Lemma}
\newtheorem{corollary}[theorem]{Corollary}
\newtheorem{definition}[theorem]{Definition}
\theoremstyle{remark}
\newtheorem{remark}[theorem]{Remark}
\title{On the Hardness of Robust Classification}
\author{Pascale Gourdeau}
\author{Varun Kanade}
\author{Marta Kwiatkowska}
\author{James Worrell}
\affil{University of Oxford}
\begin{document}
\maketitle
\begin{abstract} 

It is becoming increasingly important to understand the vulnerability
of machine learning models to adversarial attacks.  In this paper we
study the feasibility of robust learning from the perspective of
computational learning theory, considering both sample and
computational complexity.  In particular, our definition of robust
learnability requires polynomial sample complexity.  We start with two
negative results.  We show that no non-trivial concept class can be
robustly learned in the distribution-free setting against an adversary
who can perturb just a single input bit.  We show moreover that the class of monotone
conjunctions cannot be robustly learned under the uniform distribution against an adversary who can
perturb $\omega(\log n)$ input bits.  However if the adversary is
restricted to perturbing $O(\log n)$ bits, then the class of monotone
conjunctions can be robustly learned with respect to a general class of
distributions (that includes the uniform distribution).  Finally, we
provide a simple proof of the computational hardness of robust
learning on the boolean hypercube.  Unlike previous results of this nature, our result does
not rely on another computational model (e.g. the statistical query
model) nor on any hardness assumption
other than the existence of a hard learning problem in the PAC
framework.
\end{abstract}

\section{Introduction}

There has been considerable interest in adversarial machine learning
since the seminal work of~\citet{szegedy2013intriguing}, who coined
the term \emph{adversarial example} to denote the result of applying a
carefully chosen perturbation that causes a classification error to a
previously correctly classified datum.  \citet{biggio2013evasion}
independently observed this phenomenon. However, as pointed out 
by \citet{biggio2017wild}, adversarial machine learning has been 
considered much earlier in the context of spam filtering~\cite{dalvi2004adversarial,lowd2005adversarial,lowd2005good}.
Their survey also distinguished two settings: \emph{evasion attacks}, 
where an adversary modifies data at test time, and \emph{poisoning attacks}, 
where the adversary modifies the training data.\footnote{\noindent For an in-depth review and definitions of different types of attacks, the reader may refer to \cite{biggio2017wild,dreossi2019formalization}.}

Several different definitions of adversarial learning exist in the
literature and, unfortunately, in some instances the same terminology
has been used to refer to different notions (for some discussion see
e.g., \cite{dreossi2019formalization,diochnos2018adversarial}). Our
goal in this paper is to take the most widely-used definitions and
consider their implications for robust learning from a statistical and
computational viewpoint. For simplicity, we will focus on the setting
where the input space is the boolean hypercube $\mathcal X = \{0,
1\}^n$ and consider the \emph{realizable} setting, i.e. the labels are
consistent with a target concept in some concept class.

An \emph{adversarial example} is constructed from a \emph{natural
  example} by adding a perturbation. Typically, the power of the
adversary is curtailed by specifying an upper bound on the
perturbation under some norm; in our case, the only meaningful norm is
the Hamming distance. For a point $x \in \mathcal X$, let $B_\rho(x)$
denote the Hamming ball of radius $\rho$ around $x$.  Given a
distribution $D$ on $\mathcal X$, we consider the \emph{adversarial
  risk} of a hypothesis $h$ with respect to a target concept $c$ and
perturbation budget $\rho$.  We focus on two definitions of risk.  The 
\emph{exact in the ball} risk $\roblosse(h,c)$ is the probability $\Pr{x\sim D}{\exists y \in B_\rho(x)
  \cdot h(y)\neq c(y)}$ that the adversary can perturb a point $x$
drawn from distribution $D$ to a point $y$ such that $h(y)\neq c(y)$.
The \emph{constant in the ball} risk $\roblossc(h,c)$ is the probability $\Pr{x\sim D}{\exists
  y \in B_\rho(x) \cdot h(y)\neq c(x)}$ that the adversary can perturb
a point $x$ drawn from distribution $D$ to a point $y$ such that
$h(y)\neq c(x)$. These definitions encode two different
interpretations of robustness.  In the first view, robustness speaks
about the fidelity of the hypothesis to the target concept, whereas in
the latter view robustness concerns the sensitivity of the output of
the hypothesis to corruptions of the input.  In
fact, the latter view of
robustness can in some circumstances be in conflict with accuracy in the
traditional sense~\cite{tsipras2019robustness}.

\subsection{Overview of Our Contributions}

We view our conceptual contributions to be at least as important as the technical results and believe that the issues highlighted in our work will result in more concrete theoretical frameworks being developed to study adversarial learning. 

\begin{figure}
	\begin{subfigure}{0.3\textwidth}
  		\centering
  			\includegraphics[width=\textwidth]{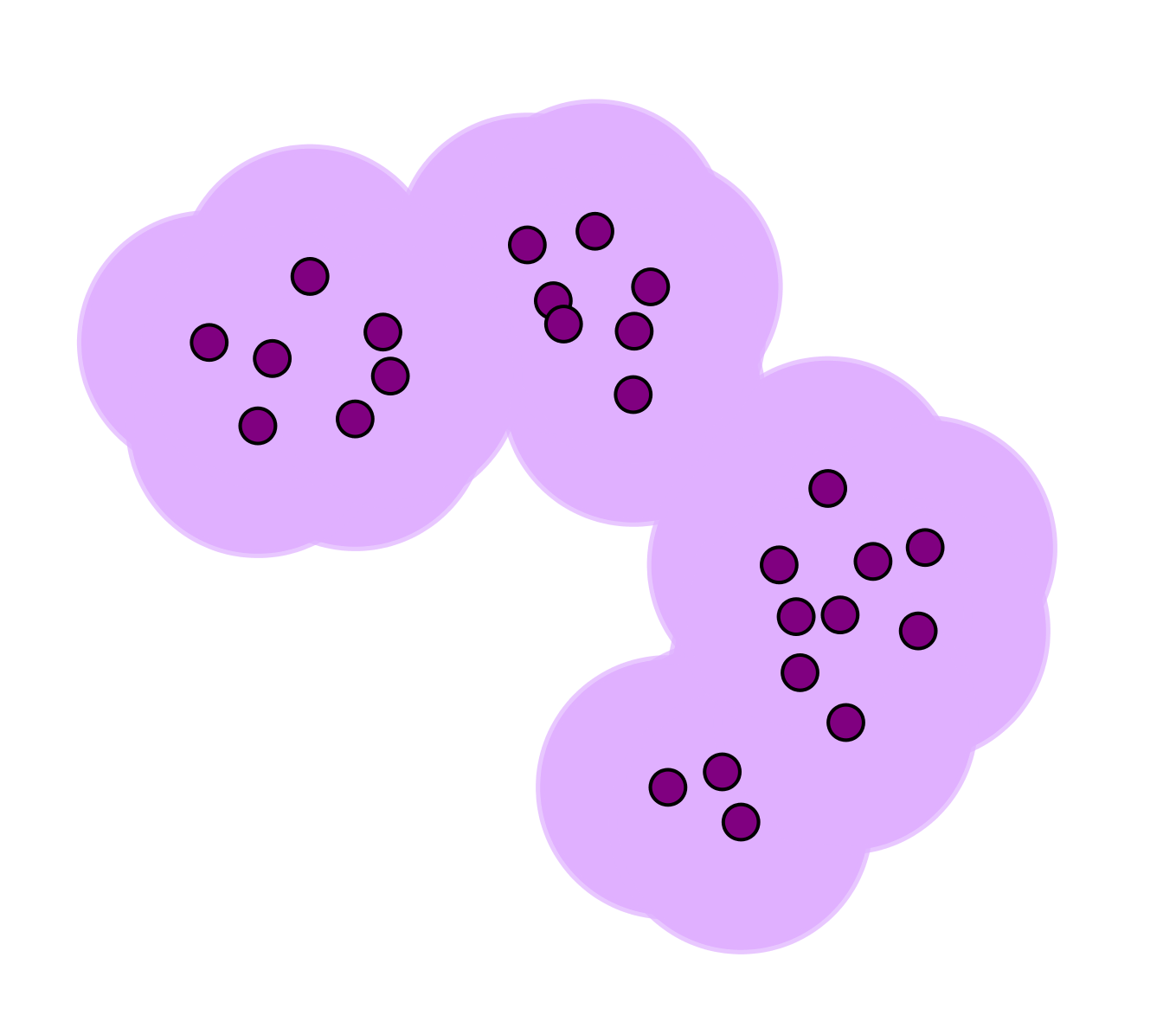}
  			\caption{}
  			\label{fig:constant}
	\end{subfigure}
	\hfill
	\begin{subfigure}{0.3\textwidth}
		\includegraphics[width=\textwidth]{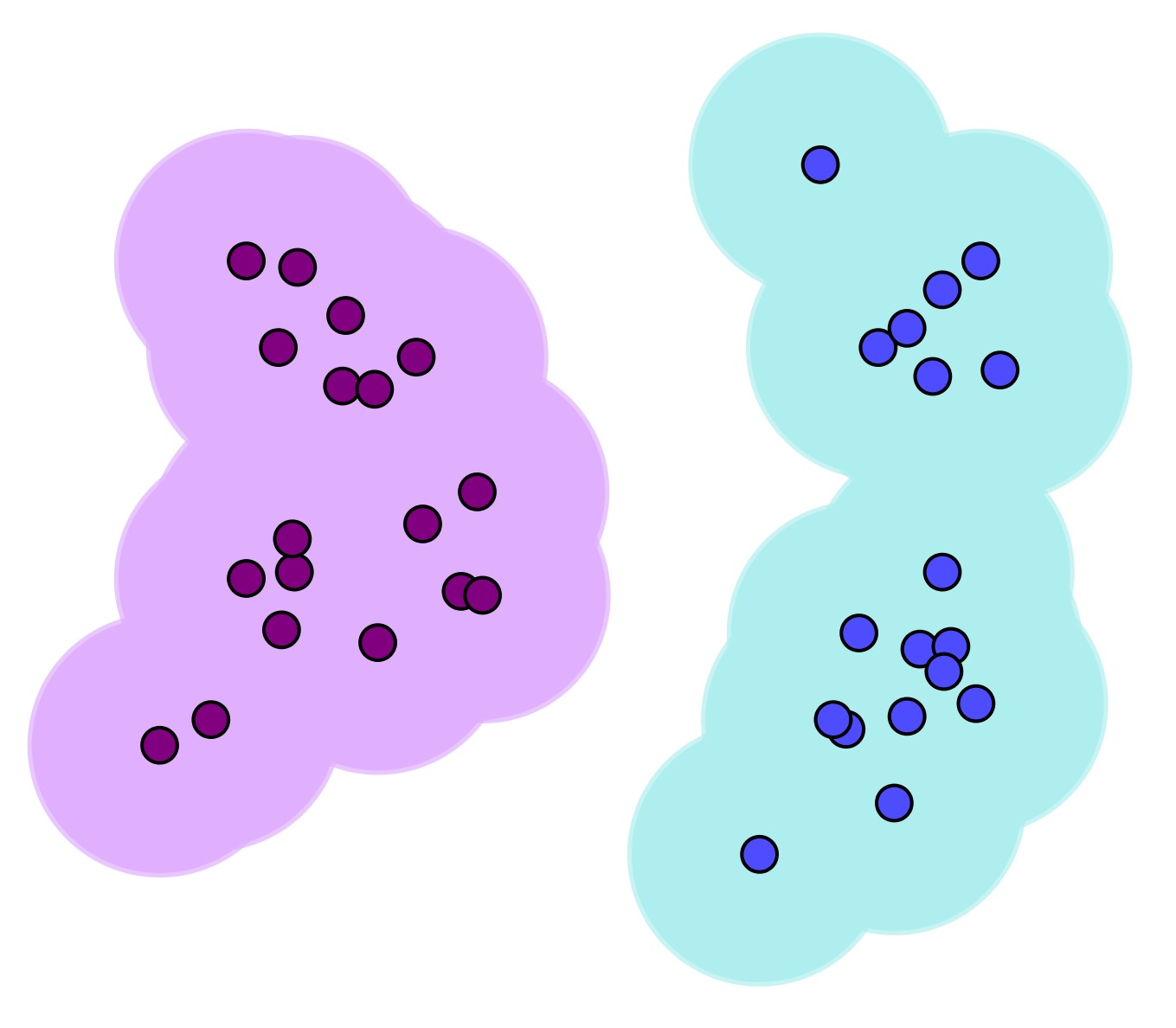}
  		\caption{}
		\label{fig:sep-supp}
	\end{subfigure}
	\hfill
	\begin{subfigure}{0.3\textwidth}
 	 \centering
		\includegraphics[width=\textwidth]{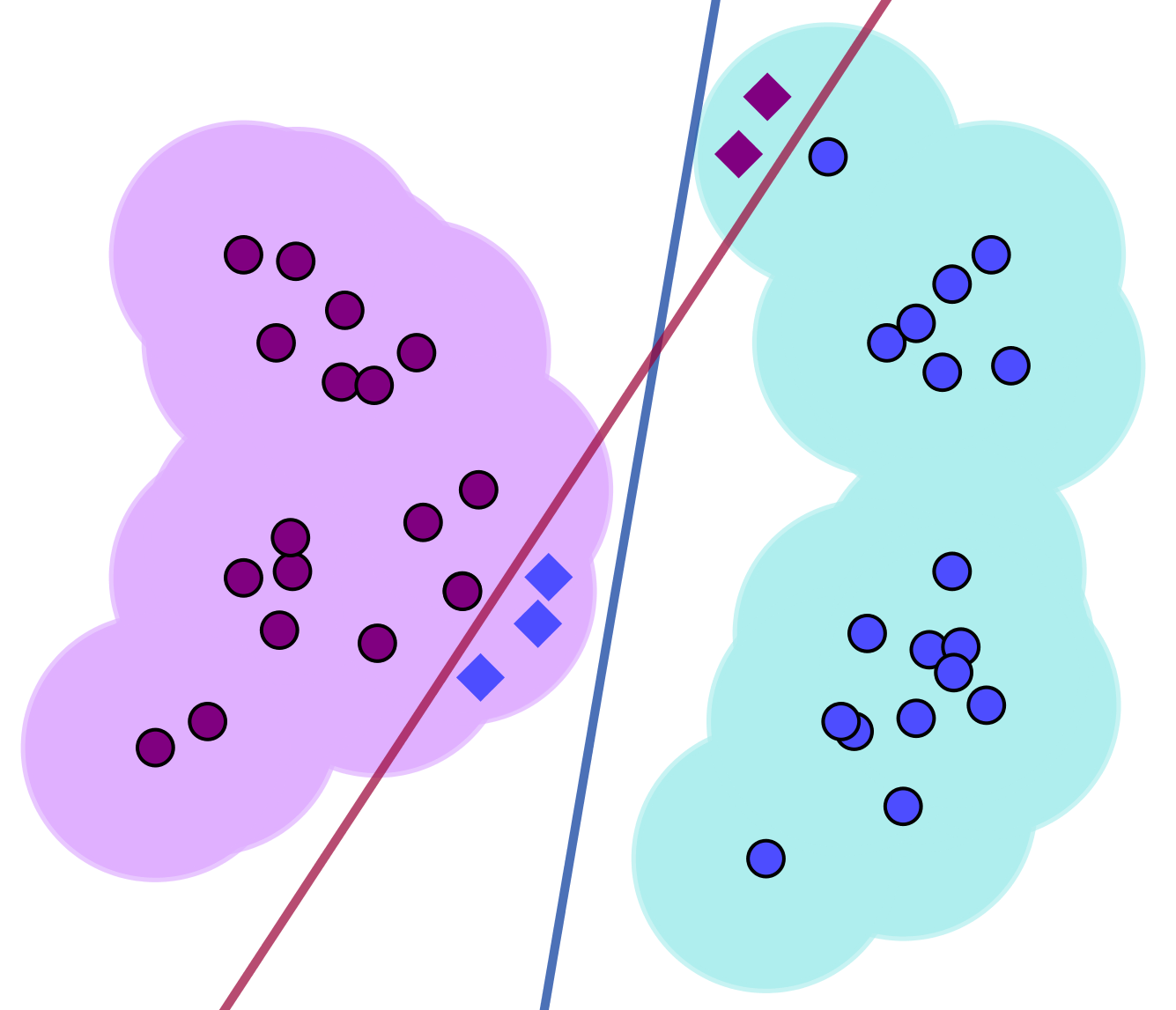}
		\caption{}
		\label{fig:diff-losses}
	\end{subfigure}
\caption{(a) The support of the distribution is such that $\roblossc(h,c)=0$ can only be achieved if $c$ is constant.
(b) The $\rho$-expansion of the support of the distribution and target $c$ admit hypotheses $h$ such that $\roblossc(h,c)=0$.
(c) An example where $\roblossc$ and $\roblosse$ differ. The red concept is the target, while the blue one is the hypothesis. The dots are the support of the distribution and the shaded regions represent their $\rho$-expansion. The diamonds represent perturbed inputs which cause $\roblosse>0$.
}
\label{fig:rob-losses}
\end{figure}



\subsubsection*{Impossibility of Robust Learning in Distribution-Free PAC Setting} 

We first consider the question of whether achieving \emph{zero} (or low) robust
risk is possible under either of the two definitions. If the \emph{balls} of
radius $\rho$ around the data points intersect so that the total region is
connected, then unless the target function is constant, it is impossible to
achieve $\roblossc(h, c)=0$ (see Figure~\ref{fig:rob-losses}). In particular,
in most cases $\roblossc(c, c) \neq 0$, i.e., even the target concept does not
have zero risk with respect to itself. We show that this is the case for
extremely simple concept classes such as \emph{dictators} or \emph{parities}.
When considering the \emph{exact on the ball} notion of robust learning, we at
least have $\roblosse(c, c) = 0$; in particular, any concept class that can be
exactly learned can be robustly learned in this sense. However, even in this
case we show that no ``non-trivial'' class of functions can be robustly
learned. We highlight that these results show that a polynomial-size sample
from the unknown distribution is not sufficient, even if the learning algorithm
has arbitrary computational power (in the sense of Turing computability).%
\footnote{We do require any operation performed by the learning algorithm is computable; the results of \citet{bubeck2018adversarial} imply that an algorithm that can potentially evaluate \emph{uncomputable} functions can always robustly learn using a polynomial-size sample. See the discussion on computational hardness below.}

\subsubsection*{Robust Learning of Monotone Conjunctions} 

Given the impossibility of distribution-free robust learning, we consider
robust learning under specific distributions. We consider one of the simplest
concept class studied in PAC Learning, the class of \emph{monotone
conjunctions}, under the class of $\log$-Lipschitz distributions (which
includes the uniform distribution) and show that this class of functions is
robustly learnable provided $\rho = O(\log n)$ and is not robustly learnable
with polynomial sample complexity for $\rho = \omega (\log n)$. A class of
distributions is said to be $\alpha$-$\log$-Lipschitz if the logarithm of the
density function is $\log(\alpha)$-Lipschitz with respect to the Hamming
distance.  Our results apply in the setting where the learning algorithm only
receives random labeled examples. On the other hand, a more powerful learning
algorithm that has access to membership queries can exactly learn monotone
conjunctions and as a result can also robustly learn with respect to
\emph{exact in the ball} loss.

\subsubsection*{Computational Hardness of PAC Learning} 

Finally, we consider computational aspects of robust learning. Our focus is on
two questions: \emph{computability} and \emph{computational complexity}. Recent
work by \citet{bubeck2018adversarial} provides a result that states that
minimizing the robust loss on a polynomial-size sample suffices for robust
learning. However, because of the existential quantifier over the ball implicit
in the definition of the \emph{exact in the ball} loss, the empirical risk
cannot be \emph{computed} as this requires enumeration over the \emph{reals}. Even if one restricted attention to concepts defined over $\rationals^n$, computing the loss would be \emph{recursively enumerable}, but not \emph{recursive}. In the case of functions
defined over finite instance spaces, such as the boolean hypercube, the loss can be evaluated provided the learning algorithm has access to a membership query oracle; 
for the \emph{constant in the ball} loss membership queries are not required.
For functions defined on $\reals^n$ it is unclear how either loss function can
be evaluated even if the learner has access to membership queries, since in principle it requires enumerating over the reals. Under
strong assumptions of \emph{inductive bias} on the target and hypothesis
class, it may be possible to evaluate the loss functions; however this would
have to be handled on a case by case basis -- for example, properties of the
target and hypothesis, such as Lipschitzness or large margin, could be used to compute the
exact in the ball loss in finite time.

Second, we consider the computational complexity of robust learning.
\citet{bubeck2018cryptographic} and~\citet{degwekar2019computational}
have shown that there are concept classes that are hard to robustly
learn under cryptographic assumptions, even when robust learning is
information-theoretically feasible.  \citet{bubeck2018adversarial} 
establish super-polynomial lower bounds for robust learning in
the \emph{statistical query} framework. We give an arguably simpler
proof of hardness, based simply on the assumption that there exist
concept classes that are hard to PAC learn. In particular, our
reduction also implies that robust learning is hard even if the
learning algorithm is allowed membership queries, provided the concept
class that we reduce from is hard to learn using membership
queries. Since the existence of one-way functions implies the existence
of concept classes that are hard to PAC learn (with or without
membership queries), our result is also based on a slightly weaker
assumption than~\citet{bubeck2018adversarial}\footnote{It is believed that the existence of hard to PAC learn concept
  classes is not sufficient to construct one-way
  functions.~\cite{ABX:2008}.}.

\subsection{Related work on the Existence of Adversarial Examples}

There is a considerable body of work that studies the inevitability of
adversarial examples,
e.g.,~\cite{fawzi2016robustness,fawzi2018analysis,fawzi2018adversarial,gilmer2018adversarial,shafahi2018adversarial}.
These papers characterize robustness in the sense that a classifier's output
on a point should not change if a perturbation of a certain
magnitude is applied to it.  Among other things, these works study
geometrical characteristics of classifiers and statistical
characteristics of classification data that lead to adversarial
vulnerability.

Closer to the present paper
are~\cite{diochnos2018adversarial,mahloujifar2018can,mahloujifar2019curse},
which work the with exact-in-a-ball notion of robust risk.
In particular, \cite{diochnos2018adversarial} considers the robustness
of monotone conjunctions under the uniform distribution on the boolean
hypercube for this notion of risk (therein called the \emph{error
  region} risk).  However~\cite{diochnos2018adversarial} does not
address the sample and computational complexity of learning: their
results rather concern the ability of an adversary to magnify the
missclassification error of \emph{any} hypothesis with respect to
\emph{any} target function by perturbing the input.  For
example, they show that an adversary who can perturb $O(\sqrt{n})$
bits can increase the missclassification probability from $0.01$ to
$1/2$.  By contrast we show that a weaker adversary, who can perturb
only $\omega(\log n)$ bits, renders it impossible to learn monotone
conjunctions with polynomial sample complexity.  The main tool used
in~\cite{diochnos2018adversarial} is the isoperimetric inequality for
the Boolean hypercube, which gives lower bounds on the volume of the
expansions of arbitrary subsets.  On the other hand, we use the
probabilistic method to establish the existence of a single hard-to-learn 
target concept for any given algorithm with polynomial sample
complexity.

\section{Definition of Robust Learning}
\label{sec:rob-loss}

The notion of robustness can be accommodated within the
basic set-up of PAC learning by adapting the definition of risk
function.  In this section we review two of the main definitions of
\emph{robust risk} that have been used in the literature.  For
concreteness we consider an input space $\mathcal{X}=\{0,1\}^n$ with
metric $d:\mathcal{X}\times\mathcal{X}\rightarrow\mathbb{N}$, where
$d(x,y)$ is the Hamming distance of $x,y\in\mathcal{X}$.  Given
$x\in\mathcal{X}$, we write $B_\rho(x)$ for the ball $\{y\in
\mathcal{X} : d(x,y)\leq \rho\}$ with centre $x$ and radius $\rho\geq
0$.

The first definition of robust risk asks that the hypothesis be
exactly equal to the target concept in the ball $B_\rho(x)$ of
radius $\rho$ around a ``test point'' $x\in\mathcal{X}$:
\begin{definition}
\label{def:loss-correct}
Given respective hypothesis and target functions
$h,c:\mathcal{X}\rightarrow\{0,1\}$, distribution $D$ on
$\mathcal{X}$, and robustness parameter $\rho\geq 0$, we define the
``exact in the ball'' robust risk of $h$ with respect to $c$ to be
\begin{equation*}
\roblosse(h,c)=\Pr{x\sim D}{\exists z\in B_\rho(x):h(z)\neq c(z)}
\, .
\end{equation*}
\end{definition}

While this definition captures a natural notion of robustness, an
obvious disadvantage is that evaluating the risk function requires
 the learner to have knowledge of the target function outside of the
training set, e.g., through membership queries.  Nonetheless, 
by considering a learner who has
oracle access to the predicate ${\exists z\in B_\rho(x):h(z)\neq
  c(z)}$, we can use the exact-in-the-ball framework to analyse 
sample complexity and to prove strong lower bounds
on the computational complexity of robust learning.

A popular alternative to the exact-in-the-ball risk function in
Definition~\ref{def:loss-correct} is the following
\emph{constant-in-the-ball risk} function:
\begin{definition}
\label{def:loss-constant}
Given respective hypothesis and target functions
$h,c:\mathcal{X}\rightarrow\{0,1\}$, distribution $D$ on
$\mathcal{X}$, and robustness parameter $\rho\geq 0$, we define the
``constant in the ball'' robust risk of $h$ with respect to $c$ as
\begin{equation*}
\roblossc(h,c)=\Pr{x\sim D}{\exists z\in B_\rho(x):h(z)\neq c(x)}
\enspace.
\end{equation*}
\end{definition}

An obvious advantage of the constant in the ball risk over
the exact in the ball version is that in the former,
evaluating the loss at point $x\in\mathcal{X}$ requires only knowledge
of the correct label of $x$ and the hypothesis $h$.  In particular,
this definition can also be carried over to the non-realizable
setting, in which there is no target.  However, from a foundational
point of view the constant in the ball risk has some drawbacks: recall from 
the previous section that under this definition it is possible to have strictly
positive robust risk in the case that $h=c$.  (Let us note in passing that the
risk functions $\roblossc$ and $\roblosse$ are in general
incomparable.  Figure~\ref{fig:diff-losses} gives an example in which
$\roblossc=0$ and $\roblosse>0$.)  Additionally, when we work in the
hypercube, or a bounded input space, as $\rho$ becomes larger, we
eventually require the function to be constant in the whole space.
Essentially, to $\rho$-robustly learn in the realisable setting, we
require concept and distribution pairs to be represented as two sets
$D_+$ and $D_-$ whose $\rho$-expansions don't intersect, as
illustrated in Figures~\ref{fig:constant} and~\ref{fig:sep-supp}.  These limitations appear even more
stringent when we consider simple concept classes such as parity
functions, which are defined for an index set $I\subseteq [n]$ as
$f_I(x)=\sum_i x_i +b\mod 2$ for $b\in\{0,1\}$.  This class can be PAC-learned, as well as
exactly learned with $n$ membership queries.  However, for any point,
it suffices to flip one bit of the index set to switch the label, so
$\roblossc(f_I,f_I)=1$ for any $\rho\geq1$ if $I\neq\emptyset$. 

Ultimately, we want the adversary's power to come from creating
perturbations that cause the hypothesis and target functions to differ
in some regions of the input space.  For this reason we favor the
exact-in-the-ball definition and henceforth work with that.  

Having settled on a risk function, we now formulate the definition of
robust learning.  For our purposes a \emph{concept class} is a family
$\mathcal{C} = \{\mathcal{C}_n\}_{n\in \mathbb{N}}$, with
$\mathcal{C}_n$ a class of functions from $\{0,1\}^n$ to $\{0,1\}$.
Likewise a \emph{distribution class} is a family $\mathcal{D} = \{
\mathcal{D}_n\}_{n\in\mathbb{N}}$, with $\mathcal{D}_n$ a set of
distributions on $\{0,1\}^n$.  Finally a \emph{robustness function} is
a function $\rho:\mathbb{N}\rightarrow \mathbb{N}$.

\begin{definition}
\label{def:robust-learning}
Fix a function $\rho:\N\rightarrow\N$. We say that an algorithm $\A$
\emph{efficiently} $\rho$-\emph{robustly learns} a concept class $\C$
with respect to distribution class $\mathcal{D}$ if there exists a
polynomial $\poly(\cdot,\cdot,\cdot)$ such that for all
$n\in\mathbb{N}$, all target concepts $c\in \C_n$, all distributions
$D \in \mathcal{D}_n$, and all accuracy and confidence parameters
$\epsilon,\delta>0$, there exists $m \leq
\poly(1/\epsilon,1/\delta,n)$, such that when $\A$ is given access to
a sample $S\sim D^m$ it outputs $h:\{0,1\}^n\rightarrow\{0,1\}$ such
that $\Pr{S\sim D^m}{\mathsf{R}^E_{\rho(n)}(h,c)<\epsilon}>1-\delta$.
\end{definition}

Note that the definition of robust learning requires polynomial sample
complexity and allows improper learning (the hypothesis $h$ need not
belong to the concept class $\mathcal{C}_n$).

In the standard PAC framework, a hypothesis $h$ is considered to
have zero risk with respect to a target concept $c$ when $\Pr{x\sim
D}{h(x)\neq c(x)}=0$.  We have remarked that exact learnability
implies robust learnability; we next give an example of a
concept class $\C$ and distribution $D$ such that $\C$ is PAC learnable under $D$ with zero risk and yet cannot be
robustly learned under $D$ (regardless of the sample complexity).

\begin{lemma}
\label{lemma:dictators}
The class of  dictators is not 1-robustly learnable (and thus not robustly learnable for any $\rho\geq1$) with respect to the robust risk of Definition~\ref{def:loss-correct} in the distribution-free setting. 
\end{lemma}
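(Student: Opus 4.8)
The plan is to exhibit a single distribution $D$ on $\boolhc$, together with the class of dictators $f_i(x)=x_i$ for $i\in[n]$, on which no algorithm can $1$-robustly learn, no matter how large a sample it draws. Take $D$ to be the point mass at the origin $\mathbf{0}$. Under the standard (non-robust) risk this pair is trivially learnable with zero error: every dictator satisfies $f_i(\mathbf{0})=0$, so any hypothesis with $h(\mathbf{0})=0$ is perfectly accurate. This gives the ``PAC learnable with zero risk'' half of the assertion and isolates robustness as the sole source of hardness.

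The key observation is that a sample $S\sim D^m$ carries no information about which dictator is the target: every labelled example equals $(\mathbf{0},0)$ regardless of $i$, so the law of the learner's output hypothesis $h$ is identical across all $n$ targets. First I would record the geometry of the relevant ball: $B_1(\mathbf{0})$ consists of $\mathbf{0}$ together with the standard basis vectors $e_1,\dots,e_n$, and $f_i(e_j)=\delta_{ij}$. Consequently the restrictions $f_i|_{B_1(\mathbf{0})}$ are pairwise distinct, disagreeing exactly at the coordinate $e_i$.

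Next I would use that, since $D$ is a point mass, $\mathsf{R}^E_1(h,f_i)\in\{0,1\}$, with value $0$ precisely when $h$ agrees with $f_i$ everywhere on $B_1(\mathbf{0})$. Because the functions $f_i|_{B_1(\mathbf{0})}$ are distinct, any fixed $h$ can match at most one of them on the ball, so $\sum_{i=1}^n \mathbf{1}[\mathsf{R}^E_1(h,f_i)=0]\le 1$ for every realised $h$. Taking expectation over the (target-independent) output law and writing $p_i=\Pr{S\sim D^m}{\mathsf{R}^E_1(h,f_i)=0}$ yields $\sum_{i=1}^n p_i\le 1$, so some target $f_{i^\star}$ satisfies $p_{i^\star}\le 1/n$.

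Finally I would instantiate Definition~\ref{def:robust-learning} with $\rho\equiv 1$, $\epsilon=\delta=\tfrac12$, and any $n\ge 2$. As the risk is $\{0,1\}$-valued, the event $\mathsf{R}^E_1(h,f_{i^\star})<\epsilon$ coincides with $\mathsf{R}^E_1(h,f_{i^\star})=0$, which has probability $p_{i^\star}\le 1/n\le\tfrac12=1-\delta$; this contradicts the requirement that the success probability strictly exceed $1-\delta$, for every sample size $m$. I expect the main subtlety to be exactly this averaging step: one must first argue that the learner's output law does not depend on the target, so that a single draw of $h$ can be scored simultaneously against all $f_i$ and the counting bound $\sum_i p_i\le 1$ applies; once this is in place, the bound is uniform in $m$, which is what delivers impossibility regardless of sample complexity.
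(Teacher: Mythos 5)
Your proof is correct, but it takes a genuinely different route from the paper's. The paper uses just two dictators, $c_1(x)=x_1$ and $c_2(x)=x_2$, together with a non-degenerate distribution $D$ satisfying $\Pr{x\sim D}{x_1=x_2}=1$ with all remaining coordinates uniform: the two targets label any sample identically, yet flipping either of the first two bits creates a disagreement, so $\mathsf{R}^E_1(c_1,c_2)=1$, and the triangle-type inequality of Lemma~\ref{lemma:robloss-triangle}, $\mathsf{R}^E_1(c_1,h)+\mathsf{R}^E_1(c_2,h)\geq \mathsf{R}^E_1(c_1,c_2)$, forces any $h$ to have expected robust risk at least $1/2$ against a uniformly random target from $\{c_1,c_2\}$. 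You instead take $D$ to be a point mass and pit all $n$ dictators against one another; the robust risk then becomes $\{0,1\}$-valued, robust learning collapses to exact identification of $f_i$ on $B_1(\mathbf{0})$, and since the restrictions $f_i|_{B_1(\mathbf{0})}$ are pairwise distinct, the success events are disjoint, giving some target success probability at most $1/n$. Both proofs rest on the same indistinguishability-plus-averaging mechanism, and both implicitly read Definition~\ref{def:robust-learning} as fixing the sample size as a function of $(\epsilon,\delta,n)$ alone, so that the hard target may be chosen after $m$ is determined; your handling of the averaging step is exactly as rigorous as the paper's on this point. Your version buys a sharper conclusion (success probability at most $1/n$, hence expected robust risk at least $1-1/n$, versus $1/2$) and needs no auxiliary lemma. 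The paper's version buys generalizability: the triangle inequality and a distribution with exponentially large support are precisely the ingredients reused in Theorem~\ref{thm:no-df-rl} (arbitrary non-trivial classes) and Theorem~\ref{thm:mon-conj} (monotone conjunctions under the uniform distribution), settings where the distribution is prescribed or must have rich support, so a point-mass counting argument cannot even get started there.
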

\begin{proof}
Let $c_1$ and $c_2$ be the dictators on variables $x_1$ and $x_2$, respectively.
Let $D$ be such that $\Pr{x\sim D}{x_1=x_2}=1$ and $\Pr{x\sim D}{x_k=1}=\frac{1}{2}$
for $k\geq3$. 
Draw a sample $S\sim D^m$ and label it according to $c\sim U(c_1,c_2)$. 
By the choice of $D$, the elements of $S$ will have the same label regardless of whether $c_1$ or $c_2$ was picked.
However, for $x\sim D$, it suffices to flip any of the first two bits to cause $c_1$ and $c_2$ to disagree on the perturbed input.
We can easily show that, for any $h\in\set{0,1}^\X$, 
 $
 \mathsf{R}^E_1(c_1,h)+ \mathsf{R}^E_1(c_2,h)
\geq  \mathsf{R}^E_1(c_1,c_2) = 1.
$
Then 
\begin{equation*}
\underset{c\sim U(c_1,c_2)}{\mathbb{E}}\eval{S\sim D^m}{ \mathsf{R}^E_1(h,c)} \geq 1/2 \enspace.
\end{equation*}
We conclude that one of $c_1$ or $c_2$ has robust risk at least 1/2.
\end{proof}
Note that a PAC learning algorithm with error probability threshold $\varepsilon=1/3$ will either output  $c_1$ or $c_2$ and will hence have standard risk  zero. We refer the reader to Appendix~\ref{app:zero-risk-learning} for further discussion on the relationship between robust and zero-risk learning.



\section{No Distribution-Free Robust Learning in $\boolhc$}
\label{sec:no-df-rl}

In this section, we show that no non-trivial concept class is efficiently 1-robustly learnable in the boolean hypercube.
Such a class is thus not efficiently $\rho$-robustly learnable for any $\rho\geq1$. 
Efficient robust learnability then requires access to a more powerful learning model or distributional assumptions. 

Let $\C_n$ be a concept class on $\boolhc$, and define a concept class as $\C=\bigcup_{n\geq 1} \C_n$.
We say that a class of functions is trivial if $\C_n$ has at most two functions, and that they differ on every point.

\begin{theorem}
\label{thm:no-df-rl}
Any concept class $\C$ is efficiently distribution-free robustly learnable iff it is trivial.
\end{theorem}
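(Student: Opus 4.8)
The statement is an ``iff,'' so I would prove the two directions separately, with the forward direction (trivial $\Rightarrow$ learnable) being routine and the converse (non-trivial $\Rightarrow$ not learnable) carrying the real content. For the easy direction: if $\C_n$ contains a single function, the learner simply outputs it; if $\C_n=\set{c_1,c_2}$ with $c_1,c_2$ differing on every point, then a single labelled example $(x,c(x))$ already identifies the target, since the two candidates disagree at $x$ and the observed label picks out exactly one of them. Outputting the identified target $h=c$ gives $\mathsf{R}^E_1(h,c)=0$ for every $\rho$, so a trivial class is efficiently robustly learnable from one example.

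For the converse I would first isolate the key inequality, which generalises the one used inside Lemma~\ref{lemma:dictators}: for any $h,c_1,c_2$ and any distribution,
\begin{equation*}
\mathsf{R}^E_1(h,c_1)+\mathsf{R}^E_1(h,c_2)\ \geq\ \mathsf{R}^E_1(c_1,c_2)\, .
\end{equation*}
This follows from a union bound, since whenever $c_1(z)\neq c_2(z)$ for some $z\in B_1(x)$, the value $h(z)$ must differ from at least one of $c_1(z),c_2(z)$, so the event witnessing $\mathsf{R}^E_1(c_1,c_2)$ at $x$ is contained in the union of the two events on the left. The plan is then to exhibit, for a non-trivial class, a distribution under which the sample is completely uninformative yet $\mathsf{R}^E_1(c_1,c_2)=1$, so that the inequality forces large risk against one of the two targets.

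Since $\C$ is non-trivial there is an $n$ at which $\C_n$ violates the triviality condition, and I work at that $n$. The combinatorial heart is to find $c_1,c_2\in\C_n$ whose disagreement set $A=\set{w:c_1(w)\neq c_2(w)}$ is nonempty and a proper subset of $\boolhc$. If $\C_n=\set{c_1,c_2}$ is non-trivial then $c_1,c_2$ agree somewhere, so $A$ is proper, and they are distinct, so $A$ is nonempty; if $\abs{\C_n}\geq 3$, then among any three distinct concepts some pair cannot disagree everywhere (otherwise two of them would both equal the bitwise complement of the third, hence be equal), giving a proper nonempty $A$ for that pair. As $A$ is nonempty, proper, and the hypercube is connected, there is an edge $(x,z)$ with $x\notin A$ and $z\in A$; I take $D$ to be the point mass at $x$. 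Then $c_1(x)=c_2(x)$, so every sample is identical whether the target is $c_1$ or $c_2$, while $z\in B_1(x)$ with $c_1(z)\neq c_2(z)$ forces $\mathsf{R}^E_1(c_1,c_2)=1$. Drawing the target uniformly from $\set{c_1,c_2}$, the learner's output $h$ is the same in both cases, so by the displayed inequality at least one of $\mathsf{R}^E_1(h,c_1),\mathsf{R}^E_1(h,c_2)$ equals $1$ for every sample; taking expectations shows some target has $\mathsf{R}^E_1(h,c)=1$ with probability at least $1/2$ over the sample, defeating robust learning for, say, $\epsilon=1/2$ and $\delta=1/4$.

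The main obstacle I anticipate is the combinatorial step of guaranteeing a pair with a proper nonempty disagreement set, since the two clauses in the definition of ``trivial'' (at most two functions, differing everywhere) are precisely what must be negated, and the $\abs{\C_n}\geq 3$ case relies on the observation that three distinct functions cannot be pairwise complementary. Everything downstream --- connectivity to produce a boundary edge, the point-mass construction, and the averaging over a uniformly random target --- is then routine. I would also emphasise that, because the sample carries no information about which of $c_1,c_2$ is the target, the impossibility holds irrespective of sample size, with Lemma~\ref{lemma:dictators} recovered as the special case of two dictators.
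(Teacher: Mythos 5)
Your proof is correct, and its skeleton matches the paper's: an easy direction via a single identifying example, the inequality $\mathsf{R}^E_1(h,c_1)+\mathsf{R}^E_1(h,c_2)\geq \mathsf{R}^E_1(c_1,c_2)$ (exactly Lemma~\ref{lemma:robloss-triangle}), a pair $c_1,c_2$ agreeing somewhere and disagreeing somewhere, and a uniformly random target that the sample cannot disambiguate. Where you genuinely diverge is the hard distribution. You put a point mass on a vertex $x$ lying across a single edge from the disagreement set, so the sample is literally uninformative at \emph{every} sample size and the lower bound on the risk is $1/2$ outright; no bound on $m$ is ever invoked. The paper instead constructs a distribution with \emph{full support} (coordinates in the relevant index set match the boundary point with probability $1-\eta$ for super-polynomially small $\eta$, the rest uniform), and then must use the assumed polynomial bound on $m$ to show that, with probability at least $\delta$, the sample fails to separate $c_1$ from $c_2$, yielding the weaker bound $\alpha\delta/2$ on the expected risk. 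What the paper's extra work buys is a stronger message, which it stresses right after the theorem: robust learning fails even under distributions supported on all of $\{0,1\}^n$, so the obstruction is not an artifact of degenerate, concentrated distributions. What your route buys is simplicity and uniformity: the impossibility is information-theoretic and independent of sample size, and it applies at any single dimension $n$ where $\C_n$ is non-trivial, whereas the paper's limiting argument (sending $n\to\infty$ to beat a polynomial $m(n)$) tacitly requires non-triviality at arbitrarily large $n$. A further small plus on your side: the combinatorial step guaranteeing a pair with a nonempty proper disagreement set --- three distinct Boolean functions cannot be pairwise complementary --- is asserted without proof in the paper, and your hypercube-connectivity argument for finding a boundary edge is cleaner than the paper's appeal to the (undefined, for general concepts) index set $I_{c_1}\cup I_{c_2}$.
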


The proof of the theorem relies on the following lemma:
\begin{lemma}
\label{lemma:robloss-triangle}
Let $c_1,c_2\in\{0,1\}^\X$ and fix a distribution on $\X$. 
Then for all $h:\boolhc\rightarrow\set{0,1}$
\begin{equation*}
\roblosse(c_1,c_2)
\leq \roblosse(c_1,h)+\roblosse(c_2,h)
\enspace.
\end{equation*}
\end{lemma}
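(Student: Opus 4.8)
The plan is to reduce the inequality to a pointwise statement and then apply a union bound. The key observation is that all three robust risks have the same form: writing $E_{f,g}(x)$ for the event $\exists z\in B_\rho(x) : f(z)\neq g(z)$, we have $\roblosse(f,g)=\Pr{x\sim D}{E_{f,g}(x)}$. So it suffices to show that, for every fixed $x\in\X$, the event $E_{c_1,c_2}(x)$ implies the disjunction $E_{c_1,h}(x)\vee E_{c_2,h}(x)$; monotonicity together with subadditivity of probability then delivers the claimed inequality.

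To establish the pointwise implication, I would suppose $E_{c_1,c_2}(x)$ holds and fix a witness $z\in B_\rho(x)$ with $c_1(z)\neq c_2(z)$. Since $c_1(z)$ and $c_2(z)$ are then distinct elements of $\set{0,1}$, the value $h(z)$ must differ from at least one of them. If $h(z)\neq c_1(z)$, then this same $z$ witnesses $E_{c_1,h}(x)$; otherwise $h(z)\neq c_2(z)$ and $z$ witnesses $E_{c_2,h}(x)$. In either case the disjunction holds at $x$, which is exactly the pointwise implication sought. Taking probabilities over $x\sim D$ and applying the union bound to the disjunction on the right then yields $\roblosse(c_1,c_2)\leq\roblosse(c_1,h)+\roblosse(c_2,h)$.

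I do not expect a genuine obstacle here, since the argument is just a triangle inequality for the single-point disagreement relation, lifted through the existential quantifier over the ball. The one subtlety worth being careful about is that the witness $z$ is \emph{shared} between the two risks on the right-hand side: one reuses the same perturbation $z\in B_\rho(x)$ rather than choosing a different witness for each term. This is precisely what makes the pointwise implication (and hence the union bound) valid, and it is where a sloppy treatment of the existential quantifiers could go wrong.
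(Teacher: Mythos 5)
Your proposal is correct and follows essentially the same route as the paper's proof: fix $x$, take a witness $z\in B_\rho(x)$ where $c_1(z)\neq c_2(z)$, note that $h(z)$ must then disagree with at least one of $c_1(z),c_2(z)$ (so the \emph{same} $z$ witnesses one of the two events on the right), and conclude by a union bound over the pointwise implication. The paper's version is just terser, leaving the union-bound step implicit in ``the result follows.''
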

\begin{proof}
Let $x\in\boolhc$ be arbitrary, and suppose that $c_1$ and $c_2$ differ on some $z\in B_\rho(x)$. 
Then either $h(z)\neq c_1(z)$ or $h(z)\neq c_2(z)$. The result follows.
\end{proof}

The idea of the proof of Theorem~\ref{thm:no-df-rl} (which can be found in Appendix~\ref{app:no-df-rl}) is a generalization of the proof of Lemma~\ref{lemma:dictators} that dictators are not robustly learnable. 
However, note that we construct a distribution whose support is all of $\X$.
It is possible to find two hypotheses $c_1$ and $c_2$ and create a distribution such that $c_1$ and $c_2$ will likely look identical on samples of size polynomial in $n$ but have robust risk $\Omega(1)$ with respect to one another. 
Since any hypothesis $h$ in $\set{0,1}^\X$ will disagree either with $c_1$ or $c_2$ on a given point $x$ if $c_1(x)\neq c_2(x)$, by choosing the target hypothesis $c$ at random from $c_1$ and $c_2$, we can guarantee that $h$ won't be robust against $c$ with positive probability.
Finally, note that an analogous argument can be made for a more general setting (for example in $\R^n$).

\section{Monotone Conjunctions}
\label{sec:mon-conj}

It turns out that we do not need recourse to ``bad'' distributions to show that very simple classes of functions are not efficiently robustly learnable.
As we demonstrate in this section, \textsf{MON-CONJ}, the class of monotone conjunctions, is not efficiently robustly learnable \emph{even under the uniform distribution} for robustness parameters that are superlogarithmic in the input dimension. 

\subsection{Non-Robust Learnability}
\label{sec:mon-conj-main}

The idea to show that \MonConj  is not efficiently robustly learnable is in the same vein as the proof of Theorem~\ref{thm:no-df-rl}.
We first start by proving the following lemma, which lower bounds the robust risk of two disjoint monotone conjunctions. 
\begin{lemma}
\label{lemma:bound-loss}
Under the uniform distribution, for any $n\in\N$, disjoint $c_1,c_2\in\MonConj$ of length $3\leq l\leq n/2$  on $\boolhc$ and robustness parameter $\rho\geq l/2$, we have that $\roblosse(c_1,c_2)$ is bounded below by a constant that can be made arbitrarily close to $\frac{1}{2}$ as $l$ gets larger. 
\end{lemma}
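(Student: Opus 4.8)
The plan is to reduce the event in the definition of $\roblosse$ to a simple statement about how many zero coordinates a random point has inside the two disjoint index sets, and then to estimate the resulting binomial probabilities. Write $c_1=\bigwedge_{i\in I_1}x_i$ and $c_2=\bigwedge_{i\in I_2}x_i$, where $I_1,I_2\subseteq[n]$ are disjoint with $\abs{I_1}=\abs{I_2}=l$ (such sets exist since $l\le n/2$). For $x$ drawn uniformly from $\boolhc$, let $a$ denote the number of coordinates in $I_1$ on which $x$ is $0$, and $b$ the number of coordinates in $I_2$ on which $x$ is $0$. Because $I_1$ and $I_2$ are disjoint and $x$ is uniform, $a$ and $b$ are independent, each distributed as $\mathrm{Binomial}(l,\tfrac12)$.

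First I would isolate a clean one-sided sufficient condition for the adversary to succeed, rather than computing the exact distance to the disagreement region. Consider the event $E=\set{a\le\rho}\cap\set{b\ge1}$. On $E$, flip to $1$ all $a\le\rho$ of the $0$-coordinates of $x$ lying in $I_1$, leaving every other coordinate unchanged, and call the result $z$. Then $d(x,z)=a\le\rho$, so $z\in B_\rho(x)$, while $c_1(z)=1$ (all of $I_1$ is now set) and $c_2(z)=c_2(x)=0$ (we never touched $I_2$, and $x$ already had $b\ge1$ zeros there). Thus $z$ witnesses $c_1(z)\neq c_2(z)$, and hence $\roblosse(c_1,c_2)\ge\Pr{}{E}$.

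It then remains to lower bound $\Pr{}{E}$. By independence, $\Pr{}{E}=\Pr{}{a\le\rho}\cdot\Pr{}{b\ge1}$. Since $\rho\ge l/2$ we have $\set{a\le l/2}\subseteq\set{a\le\rho}$, and by symmetry of the binomial about $l/2$ one gets $\Pr{}{a\le l/2}\ge\tfrac12$; also $\Pr{}{b\ge1}=1-2^{-l}$. Combining, $\roblosse(c_1,c_2)\ge\tfrac12\bigl(1-2^{-l}\bigr)$, a constant that approaches $\tfrac12$ from below as $l$ grows, exactly as claimed.

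I expect the only genuine subtlety to be the reachability step rather than the arithmetic: one must verify that this single, one-directional strategy (forcing $c_1=1$ while $c_2$ remains $0$) already lands in the disagreement region, so that no delicate case analysis over which of $c_1,c_2$ to violate is required. The hypotheses then enter exactly where one would expect — disjointness together with $l\le n/2$ gives the independence of $a$ and $b$, while $\rho\ge l/2$ is precisely the threshold that forces $\Pr{}{a\le\rho}\ge\tfrac12$. I would also remark that accounting for the symmetric strategy (violating $c_2=1$ instead) strengthens the bound to $1-\Pr{}{a>\rho}^2$, which tends to $\tfrac34$; but the one-sided estimate above already suffices for the stated claim.
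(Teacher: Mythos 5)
Your proof is correct and is essentially the paper's own argument: the paper bounds $\roblosse(c_1,c_2)$ below by $\Pr{x\sim D}{c_1(x)=0\wedge x\text{ has at least }l/2\text{ 1's in }I_{c_2}}=(1-2^{-l})/2$, which is exactly your event $E$ with the roles of $c_1$ and $c_2$ interchanged, and the same independence-of-disjoint-coordinates and binomial-symmetry estimates. Your write-up merely makes explicit the adversary's flipping strategy that the paper leaves implicit, which is a fine (indeed clearer) presentation of the identical idea.
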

\begin{proof}
For a  hypothesis $c\in\MonConj$, let $I_c$ be the set of variables in $c$.
Let $c_1,c_2\in\C$ be as in the theorem statement.
Then the robust risk $\roblosse(c_1,c_2)$ is bounded below by 
\begin{equation*}
\Pr{x\sim D}{c_1(x)=0\wedge x\text{ has at least $l/2$ 1's in }I_{c_2}}= (1-2^{-l})/2\enspace.
\end{equation*}
\end{proof}

Now, the following lemma shows that if we choose the length of the conjunctions $c_1$ and $c_2$ to be super-logarithmic in $n$, then, for a sample of size polynomial in $n$, $c_1$ and $c_2$ will agree on $S$ with probability at least $1/2$.
The proof can be found in Appendix~\ref{app:concepts-agree}.

\begin{lemma}
\label{lemma:concepts-agree}
For any functions $l(n)=\omega(\log(n))$ and $m(n)=\poly(n)$, for any disjoint monotone conjunctions $c_1,c_2$ such that $|I_{c_1}|=|I_{c_2}|=l(n)$, there exists $n_0$ such that for all $n\geq n_0$, a sample $S$ of size $m(n)$ sampled i.i.d. from $D$ will have that $c_1(x)=c_2(x)=0$ for all $x\in S$ with probability at least $1/2$.
\end{lemma}

We are now ready to prove our main result of the section.

\begin{theorem}
\label{thm:mon-conj}
\MonConj is not efficiently $\rho$-robustly learnable for $\rho(n)=\omega(\log(n))$ under the uniform distribution.
\end{theorem}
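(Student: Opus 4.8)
The plan is to show that the existence of a robust learning algorithm for \MonConj under the uniform distribution with $\rho(n) = \omega(\log n)$ would contradict the lower bound on robust risk between disjoint conjunctions established in Lemmas~\ref{lemma:bound-loss} and~\ref{lemma:concepts-agree}. The overall structure mirrors the proof of Theorem~\ref{thm:no-df-rl} and Lemma~\ref{lemma:dictators}: I would fix an arbitrary candidate algorithm $\A$ with polynomial sample complexity $m(n)$ and exhibit a pair of target concepts on which it must fail with noticeable probability, using a randomized choice of target to push the argument through via the probabilistic method.

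**First**, let me set up the adversary. Given that $\rho(n) = \omega(\log n)$, I would choose a length $l(n) = \omega(\log n)$ for the conjunctions that is small enough to satisfy the hypotheses $3 \le l \le n/2$ and $\rho \ge l/2$ of Lemma~\ref{lemma:bound-loss}; for instance, taking $l(n) = \Theta(\rho(n))$ (capped appropriately) works, since $\rho = \omega(\log n)$ guarantees $l = \omega(\log n)$ as well. Then I would pick two \emph{disjoint} monotone conjunctions $c_1, c_2 \in \MonConj$, each of length $l(n)$, on disjoint index sets $I_{c_1}, I_{c_2} \subseteq [n]$ (possible since $2l \le n$). By Lemma~\ref{lemma:bound-loss}, $\roblosse(c_1, c_2)$ is bounded below by $(1 - 2^{-l})/2$, which approaches $1/2$ as $l$ grows, so in particular it exceeds some constant $\gamma > 0$ for all large $n$.

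**Next**, I would run the two-concepts-look-identical argument. By Lemma~\ref{lemma:concepts-agree}, for $l(n) = \omega(\log n)$ and $m(n) = \poly(n)$, a sample $S \sim D^{m(n)}$ satisfies $c_1(x) = c_2(x) = 0$ for all $x \in S$ with probability at least $1/2$ once $n \ge n_0$. On such a sample the learner receives identical labeled data whether the target is $c_1$ or $c_2$, so its output $h$ has the same distribution in both cases. Invoking Lemma~\ref{lemma:robloss-triangle} (the triangle inequality for $\roblosse$), for any $h$ we have $\roblosse(c_1, h) + \roblosse(c_2, h) \ge \roblosse(c_1, c_2) \ge \gamma$. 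Choosing the target $c \sim U(c_1, c_2)$ uniformly at random and taking expectations over both the target choice and the sample, conditioned on the agreement event I would conclude
\begin{equation*}
\underset{c \sim U(c_1,c_2)}{\mathbb{E}}\;\eval{S\sim D^m}{\roblosse(h,c)} \;\ge\; \frac{\gamma}{2}\,.
\end{equation*}
Hence for at least one of $c_1, c_2$ the expected robust risk is $\Omega(1)$, which—after unfolding the expectation over $S$ with a Markov-type argument—contradicts the $(\epsilon,\delta)$ guarantee of Definition~\ref{def:robust-learning} for small enough constant $\epsilon$ and $\delta$.

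**The main obstacle** is the bookkeeping that ties together the two independent sources of randomness (the random target and the random sample) so that the agreement event of Lemma~\ref{lemma:concepts-agree} and the risk lower bound combine cleanly: I must argue that, conditioned on $S$ being uninformative, the algorithm genuinely cannot distinguish the two targets, and then convert the resulting lower bound on \emph{expected} robust risk into a statement that no $(\epsilon, \delta)$ pair can be met. Care is also needed to verify that the chosen $l(n)$ simultaneously satisfies all the side constraints ($l = \omega(\log n)$ for Lemma~\ref{lemma:concepts-agree}, and $3 \le l \le n/2$ with $\rho \ge l/2$ for Lemma~\ref{lemma:bound-loss}); since $\rho = \omega(\log n)$, this is always arrangeable, but the inequalities should be checked explicitly for large $n$.
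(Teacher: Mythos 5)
Your proposal is correct and follows essentially the same route as the paper's own proof: the same three lemmas (the risk lower bound for disjoint conjunctions, the sample-agreement lemma, and the triangle inequality for $\roblosse$) combined via a uniformly random choice of target and an expected-risk lower bound of $\Omega(1)$. The only differences are cosmetic---you choose $l(n)=\Theta(\rho(n))$ for the given $\rho$ rather than fixing $l$ first, and you make explicit the final Markov-type step converting the expectation bound into a violation of the $(\epsilon,\delta)$ guarantee, both of which are sound.
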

\begin{proof}

Fix any algorithm $\A$ for learning \MonConj.
We will show that the expected robust risk between a randomly chosen target function and any hypothesis returned by $\A$ is bounded below by a constant.
Fix a function $\poly(\cdot,\cdot,\cdot,\cdot,\cdot)$, and note that, since $\text{size}(c)$ and $\rho$ are both at most $n$, we can simply consider a function $\poly(\cdot,\cdot,\cdot)$ in the variables $1/\epsilon,$ and $1/\delta,n$ instead.
Let $\delta=1/2$, and fix a function $l(n)=\omega(\log(n))$ that satisfies $l(n)\leq n/2$, and let $\rho(n)=l(n)/2$ ($n$ is not yet fixed).
Let $n_0$ be as in Lemma~\ref{lemma:concepts-agree}, where $m(n)$ is the fixed sample complexity function.
Then Equation~(\ref{eqn:sample-size}) holds for all $n\geq n_0$.

Now, let $D$ be the uniform distribution on $\boolhc$ for $n\geq \max(n_0,3)$, and choose $c_1$, $c_2$ as in Lemma~\ref{lemma:bound-loss}.
Note that $\roblosse(c_1,c_2)>\frac{5}{12}$ by the choice of $n$.
Pick the target function $c$ uniformly at random between $c_1$ and $c_2$, and label $S\sim D^m$ with $c$, where $m=\poly(1/\epsilon,1/\delta,n)$.
By Lemma~\ref{lemma:concepts-agree}, $c_1$ and $c_2$ agree with the labeling of $S$ (which implies that all the points have label $0$) with probability at least~$\frac{1}{2}$ over the choice of $S$. 

Define the following three events for $S\sim D^m$: 
\begin{align*}
&\E:\;{c_1}_{|S}={c_2}_{|S}\;,\enspace
\E_{c_1}:\;c=c_1\;,\enspace
\E_{c_2}:\;c=c_2\enspace.
\end{align*}

Then, by Lemmas~\ref{lemma:concepts-agree} and~\ref{lemma:robloss-triangle}, 

\begin{align*}
\eval{c,S}{\roblosse(\A(S),c)}
&\geq\Pr{c,S}{\E}\eval{c,S}{\roblosse(\A(S),c)\;|\;\E}\\
&>\frac{1}{2}\left(
\Pr{c,S}{\E_{c_1}}\eval{S}{\roblosse(\A(S),c)\;|\;\E\cap\E_{c_1}}
+\Pr{c,S}{\E_{c_2}}\eval {S}{\roblosse(\A(S),c)\;|\;\E\cap \E_{c_2}}
\right)\\
&=\frac{1}{4}\;\eval{S}{\roblosse(\A(S),c_1)+\roblosse(\A(S),c_2)\;|\;\E}\\
&\geq \frac{1}{4}\;\eval{S}{\roblosse(c_2,c_1)}\\
&>0.1
\enspace.
\end{align*}

\end{proof}

\subsection{Robust Learnability Against a Logarithmically-Bounded Adversary}
\label{sec:rob-learning-unif}
The argument showing the non-robust learnability of \MonConj under the uniform distribution in the previous section cannot be carried through if the conjunction lengths are logarithmic in the input dimension, or if the robustness parameter is small compared to that target conjunction's length.
In both cases, we show that it is possible to efficiently robustly learn these conjunctions if the class of distributions is $\alpha$-$\log$-Lipschitz, i.e. there exists  a universal constant $\alpha\geq1$ such that for all $n\in \N$, all distributions $D$ on $\boolhc$ and for all input points $x,x'\in \boolhc$, if $d_H(x,x')=1$, then $|\log(D(x))-\log(D(x'))|\leq\log(\alpha)$ (see Appendix~\ref{app:log-lipschitz} for further details and useful facts).

\begin{theorem}
\label{thm:mon-conj-rob}
  Let $\mathcal{D}=\{\mathcal{D}_n\}_{n\in\mathbb{N}}$, where $\mathcal{D}_n$ is a set of
     $\alpha$-$\log$-Lipschitz distributions on $\{0,1\}^n$ for all $n\in\mathbb{N}$.   Then the class of
    monotone conjunctions is $\rho$-robustly learnable with respect to
    $\mathcal{D}$ for robustness function $\rho(n)=O(\log n)$.
\end{theorem}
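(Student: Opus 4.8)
The plan is to show that the textbook elimination algorithm for monotone conjunctions, which uses no knowledge of the distribution, already robustly learns \MonConj against an $O(\log n)$-bounded adversary. Recall the algorithm: initialise the hypothesis to the conjunction of all $n$ variables and, on seeing a positive example $x$ (one with $c(x)=1$), delete from the hypothesis every variable $x_j$ with $x_j=0$; output the resulting conjunction $h$. Writing $I_c$ and $I_h$ for the variable sets of the target and of $h$, no variable of $I_c$ is ever deleted (it equals $1$ in every positive example), so the invariant $I_c\subseteq I_h$ holds with certainty. Hence $h(x)=1\Rightarrow c(x)=1$, and the disagreement region satisfies $\{x:h(x)\neq c(x)\}=\{x:c(x)=1,\,h(x)=0\}\subseteq\{x:c(x)=1\}$, independently of which spurious variables happen to survive.

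First I would reduce the robust risk to a measure-theoretic quantity. Writing $S^{+\rho}$ for the $\rho$-expansion of a set $S$, the definition unwinds to $\roblosse(h,c)=\Pr{x\sim D}{x\in(\{h\neq c\})^{+\rho}}$, since some $z\in B_\rho(x)$ disagrees exactly when $x$ is within Hamming distance $\rho$ of the disagreement set. As expansion is monotone and $\{h\neq c\}\subseteq\{c=1\}$, this yields the bound $\roblosse(h,c)\le D\big((\{c=1\})^{+\rho}\big)$, valid for whatever hypothesis the algorithm returns. A point $y$ lies in $(\{c=1\})^{+\rho}$ precisely when at most $\rho$ of its coordinates in $I_c$ equal $0$ (flipping those to $1$ reaches the positive region), so the task becomes bounding $\Pr{y\sim D}{|\{i\in I_c:y_i=0\}|\le\rho}$.

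Next I would bound this tail via the log-Lipschitz hypothesis. By the chain rule together with the fact that an $\alpha$-log-Lipschitz distribution has every conditional bit-probability in $[\tfrac{1}{1+\alpha},\tfrac{\alpha}{1+\alpha}]$, any fixed pattern on the $|I_c|=\ell$ coordinates of $I_c$ has probability at most $\beta^{\ell}$ with $\beta=\tfrac{\alpha}{1+\alpha}<1$; summing over patterns with at most $\rho$ zeros gives $\Pr{y\sim D}{|\{i\in I_c:y_i=0\}|\le\rho}\le 2\beta^{\ell}\binom{\ell}{\rho}$. Since $\binom{\ell}{\rho}\le(e\ell/\rho)^{\rho}$, once $\ell\ge C\rho$ for a constant $C=C(\alpha)$ the factor $\beta^{\ell}$ dominates and the bound decays geometrically in $\ell$. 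Hence there is a threshold $k^*=\Theta_\alpha(\rho+\log(1/\epsilon))=O_\alpha(\log n+\log(1/\epsilon))$ such that $\ell>k^*$ already forces $\roblosse(h,c)<\epsilon$. In the complementary regime $\ell\le k^*$ I would argue that elimination recovers $c$ exactly: the log-Lipschitz lower bound gives $p^+:=\Pr{}{c(x)=1}\ge(\tfrac{1}{1+\alpha})^{\ell}\ge(\tfrac{1}{1+\alpha})^{k^*}=n^{-O(1)}\epsilon^{O(1)}$, inverse-polynomial, so a polynomial-size sample contains $\Omega(\log(n/\delta))$ positive examples with high probability. Conditioning on the positive region leaves the distribution $\alpha$-log-Lipschitz on the free coordinates, so each spurious $j\notin I_c$ is $0$ in a given positive example with probability at least $\tfrac{1}{1+\alpha}$; a Chernoff-plus-union-bound argument then deletes every spurious variable, giving $I_h=I_c$ and $\roblosse(h,c)=\roblosse(c,c)=0$, except with probability $\delta$. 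Combining the two regimes gives $\roblosse(h,c)<\epsilon$ with probability at least $1-\delta$ using $\poly(n,1/\epsilon,1/\delta)$ samples.

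I expect the main obstacle to be the quantitative bookkeeping that makes the two regimes meet. The threshold $k^*$ is dictated by the expansion tail bound in the large-$\ell$ case, whereas exact recovery in the small-$\ell$ case is only affordable with polynomially many samples when $p^+\ge n^{-O(1)}$, i.e. when $k^*=O(\log n)$. Verifying that the choice $\rho=O(\log n)$ simultaneously keeps $\binom{\ell}{\rho}$ subexponential in $\ell$ (so that $k^*=O(\log n+\log(1/\epsilon))$) and keeps $p^+$ inverse-polynomial is the crux; the log-Lipschitz conditioning facts underpinning both the upper tail bound and the lower bound on $p^+$ are the supporting technical ingredients.
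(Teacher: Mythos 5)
Your proof is correct and shares its skeleton with the paper's own: the same elimination algorithm, the invariant $I_c\subseteq I_h$ (so the disagreement region lies inside $\{c=1\}$), the identification of $\roblosse(h,c)$ with the mass of the $\rho$-expansion of the disagreement region, and the same dichotomy between short targets (recovered exactly from polynomially many samples) and long targets (where the risk bound holds for any hypothesis satisfying the invariant). The genuine difference is the concentration step for long conjunctions. The paper's Claim~2 handles the possibly dependent bits in $I_c$ with a supermartingale and the Azuma--Hoeffding inequality, obtaining $\Pr{x\sim D}{\exists z\in B_\rho(x)\cdot c(z)=1}\le\varepsilon$ whenever $l\ge\frac{8}{\eta^2}\log\frac{1}{\varepsilon}$ and $\rho\le\frac{\eta l}{2}$, where $\eta=\frac{1}{1+\alpha}$. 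You instead enumerate patterns: every fixed assignment to the $\ell$ coordinates of $I_c$ has mass at most $\beta^\ell$ with $\beta=\frac{\alpha}{1+\alpha}$ (part~(4) of Lemma~\ref{lemma:log-lips-facts}), and there are at most $\sum_{j\le\rho}\binom{\ell}{j}$ patterns with at most $\rho$ zeros. This is more elementary --- no martingale machinery --- and it isolates the only distributional ingredient this step really needs (a min-entropy-type bound on fixed patterns); the price is a $\log(1/\eta)$-factor loss in the admissible ratio $\rho/\ell$ and reliance on a finite domain where patterns can be enumerated, whereas the martingale argument is tighter in that ratio and more portable. Your short-target case also differs, but only cosmetically: you collect positive examples and invoke the conditional log-Lipschitz property (part~(3) of Lemma~\ref{lemma:log-lips-facts}), while the paper's Claim~1 bounds $\Pr{x\sim D}{x_i=0\wedge c(x)=1}\ge\eta^{l+1}$ directly; the counting is essentially identical. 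Two bookkeeping notes: the estimate $\sum_{j\le\rho}\binom{\ell}{j}\le 2\binom{\ell}{\rho}$ needs $\rho$ to be a small constant fraction of $\ell$ (true in your regime $\ell\ge C(\alpha)\rho$, but worth stating), and one should verify, e.g.\ via the tangent bound $\ln t\le t/s+\ln s-1$, that $\ell\ge C_1(\alpha)\rho+C_2(\alpha)\log(1/\epsilon)$ indeed forces $\beta^\ell(e\ell/\rho)^\rho<\epsilon$ even though $(e\ell/\rho)^\rho$ grows with $\ell$; both check out, so your threshold $k^*=\Theta_\alpha(\rho+\log(1/\epsilon))$ and the resulting polynomial sample complexity stand.
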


The proof can be found in Appendix~\ref{app:mon-conj}.
This combined with Theorem~\ref{thm:mon-conj-rob} shows that $\rho(n)=\log (n)$ is essentially the threshold for efficient robust learnability of the class \MonConj.


\section{Computational Hardness of Robust Learning}
\label{sec:comp-hardness}

In this section, we establish that the computational hardness of PAC-learning a concept class $\C$ with respect to a distribution class $\D$ implies the computational hardness of robustly learning a family of concept-distribution pairs from a related class $\C'$ and a restricted class of distributions $\D'$.
This is essentially a version of the main result of \cite{bubeck2018adversarial}, which used the constant-in-the-ball definition of robust risk. 
Our proof also uses the \cite{bubeck2018adversarial} trick of encoding a point's label in the input for the robust learning problem.
Interestingly, our proof does not rely on any assumption other than the existence of a hard learning problem in the PAC framework and is valid under \emph{both} Definitions~\ref{def:loss-correct} and~\ref{def:loss-constant}  of robust risk.

\paragraph{Construction of $\C'$.}
Suppose we are given $\C=\set{\C_n}_{n\in\N}$ and $\D=\set{\D_n}_{n\in\N}$ with $\C_n$ and $\D_n$ defined on $\X_n=\boolhc$.
Given $k\in\N$, we define the family of concept and distribution pairs $\{(c',D')\}_{D'\in\D'_{c'},c'\in\C'}$, where $\C'=\{\Cprime\}_{k,n\in\N}$ 
 on $\Xprime=\boolhcprime$ as follows.
Let $\maj_k:\Xprime\rightarrow\X_n$ be the function that returns the majority vote on each subsequent block of $k$ bits, and ignores the last bit.
We define  $\Cprime = \set{c\circ \maj_{2k+1}\;|\; c\in\C_n}$.
Let $\varphi_k:\X_n\rightarrow\Xprime$ be defined as
\begin{align*}
&\varphi_k(x):= \underbrace{x_1\dots x_1x_2\dots x_{d-1} x_d\dots x_d}_{2k+1\text{ copies of each }x_i}c(x)
\;,\quad\varphi_k(S):= \set{\varphi_k(x_i)\;|\; x_i\in S}
\enspace,
\end{align*}
for $x=x_1x_2\dots x_d\in \X$ and $S\subseteq \X$.
For a concept $c\in\C_n$, each $D\in\D_n$ induces a distribution $D'\in\D'_{c'}$, where $c'=c\circ \maj_{2k+1}$ and $D'(z)=D(x)$ if $z=\varphi_k(x)$, and $D'(z)=0$ otherwise.

As shown below, this set up allows us to see that any algorithm for learning $\C_n$ with respect to $\D_n$ yields an algorithm for learning the pairs $\{(c',D')\}_{D'\in\D'_{c'},c'\in\C'}$. 
However, any \emph{robust} learning algorithm cannot solely rely on the last bit of the input, as it could be flipped by an adversary.
Then, this algorithm can be used  to PAC-learn $\C_n$. 
This establishes the equivalence of the computational difficulty between PAC-learning $\C_n$ with respect to $\Dprime$ and robustly learning $\{(c',D')\}_{D'\in\D'_{c'},c'\in\Cprime}$. 
As mentioned earlier, we can still efficiently PAC-learn the pairs $\{(c',D')\}_{D'\in\D'_{c'},c'\in\C'}$  simply by always outputting a hypothesis that returns the last bit of the input.

\begin{theorem}
\label{thm:comp-hardness}
For any concept class $\C_n$, family of distributions $\D_n$ over $\boolhc$ and $k\in\N$, there exists a concept class $\Cprime$ and a family of distributions $\Dprime$ over $\boolhcprime$ such that \emph{efficient} $k$-robust learnability of  the concept-distribution pairs $\{(c',D')\}_{D'\in\D'_{c'},c'\in\Cprime}$   and either of the robust risk functions $\mathsf{R}^C_k$ or $\mathsf{R}^E_k$ implies \emph{efficient} PAC-learnability of $\C_n$ with respect to $\D_n$.
\end{theorem}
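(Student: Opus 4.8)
The plan is to turn a $k$-robust learner $\A'$ for the pairs $\{(c',D')\}_{D'\in\D'_{c'},c'\in\Cprime}$ into an ordinary PAC learner for $\C_n$ under $\D_n$, by pushing examples through $\varphi_k$ and then evaluating the returned hypothesis on padded inputs. The whole reduction rests on one structural observation about the construction: because each of the $n$ blocks of $z=\varphi_k(x)$ consists of $2k+1$ identical bits, flipping at most $k$ of the $(2k+1)n+1$ coordinates can never change the majority of any block, since that would require flipping at least $k+1$ coordinates inside a single block. Hence $\maj_{2k+1}(z')=\maj_{2k+1}(z)=x$ for every $z'\in B_k(\varphi_k(x))$, and so the target $c'=c\circ\maj_{2k+1}$ is \emph{constant}, equal to $c(x)$, on the entire ball $B_k(\varphi_k(x))$. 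The only coordinate of $z$ whose value the adversary can profitably change is the last (label) bit, which sits at Hamming distance $1$ from $z$.

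This observation immediately collapses the two risk notions on the construction. Since $D'$ is supported on $\{\varphi_k(x)\}$ with $D'(\varphi_k(x))=D(x)$, and since $c'(z')=c(x)=c'(\varphi_k(x))$ for all $z'\in B_k(\varphi_k(x))$, both $\mathsf{R}^E_k(h',c')$ and $\mathsf{R}^C_k(h',c')$ equal $\Pr{x\sim D}{\exists z'\in B_k(\varphi_k(x)):h'(z')\neq c(x)}$. This is precisely why the argument is insensitive to the choice of risk function, and why the theorem covers both $\mathsf{R}^C_k$ and $\mathsf{R}^E_k$ at once.

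For the reduction itself, given labeled examples $(x_i,c(x_i))$ with $x_i\sim D$, I form $\varphi_k(x_i)$; since $c'(\varphi_k(x))=c(x)$, these are correctly labeled examples drawn from $D'\in\D'_{c'}$, so feeding them to $\A'$ yields, with probability at least $1-\delta$, a hypothesis $h'$ with robust risk below $\epsilon$. I then output $h(x):=h'(\psi(x))$, where $\psi(x)$ repeats each $x_i$ $2k+1$ times and appends the bit $0$. Because $\psi(x)$ differs from $\varphi_k(x)$ in at most the last coordinate, $\psi(x)\in B_k(\varphi_k(x))$ whenever $k\geq 1$; consequently, whenever the robust-risk event fails at $x$ (so $h'$ agrees with $c(x)$ everywhere on the ball, in particular at $\psi(x)$), we get $h(x)=c(x)$. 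Therefore $\Pr{x\sim D}{h(x)\neq c(x)}\leq\mathsf{R}^E_k(h',c')<\epsilon$, so $h$ is a valid PAC hypothesis. Efficiency is routine: $\varphi_k$ and $\psi$ are computable in time polynomial in the new dimension $(2k+1)n+1$, which is itself polynomial in $n$, and $\A'$ is efficient, so the sample complexity and running time remain polynomial in $1/\epsilon$, $1/\delta$, and $n$.

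The only real content is the invariance argument of the first paragraph: verifying that a $k$-bit budget is provably insufficient to flip any block's majority, and hence that the robust learner is forced to recover $c$ from the block contents rather than cheating by reading the planted label bit. Everything afterward—the coincidence of the two risk measures and the final risk inequality—falls out directly, and I expect no further obstacle there.
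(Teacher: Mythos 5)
Your proposal is correct and is essentially the same reduction as the paper's proof: push labeled examples through $\varphi_k$ to simulate a sample from $D'$, run the robust learner, and evaluate the returned hypothesis on padded test points, relying on the fact that $k$ bit flips cannot change any block's majority, so $c'$ is constant on each ball $B_k(\varphi_k(x))$, both risk functions coincide, and the planted label bit cannot be exploited. If anything, your explicit test-time map $\psi$ (padding with a fixed last bit $0$ and noting $\psi(x)\in B_k(\varphi_k(x))$ for $k\geq 1$) treats the point where the learner does not know $c(x)$ more carefully than the paper's informal ``output $h=h'\circ\varphi_k$ with the last bit random.''
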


Before proving the above result, let us first prove the following proposition.

\begin{proposition}
\label{prop:sample-compl}
The concept-distribution pairs $\{(c',D')\}_{D'\in\D'_{c'},c'\in\Cprime}$  can be $k$-robustly learned using $O\left(\frac{1}{\epsilon}\left(\log\abs{\C_n}+\log\frac{1}{\delta}\right)\right)$ examples.
\end{proposition}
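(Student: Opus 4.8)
The plan is to reduce the claim to the textbook realizable PAC sample-complexity bound for a finite hypothesis class, applied to the \emph{robust} loss in place of the ordinary $0/1$ loss. The hypothesis class I would use is $\Cprime$ itself, which satisfies $\abs{\Cprime}\leq\abs{\C_n}$ because it is the image of $\C_n$ under the map $c\mapsto c\circ\maj_{2k+1}$, so that $\log\abs{\Cprime}\leq\log\abs{\C_n}$. The first and only non-routine step is to verify that we are genuinely in the realizable setting, i.e. that the target $c'=c\circ\maj_{2k+1}$ has zero robust risk against itself. This is where the construction does its work: every point in the support of $D'$ has the form $\varphi_k(x)$, in which each block of $2k+1$ bits is constant. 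Flipping at most $k$ bits cannot change the majority of any such block, since one needs $k+1$ flips to tip a block of size $2k+1$; and although an adversary can flip the final label bit, $\maj_{2k+1}$ discards it. Hence $\maj_{2k+1}(z)=\maj_{2k+1}(\varphi_k(x))=x$ for every $z\in B_k(\varphi_k(x))$, so $c'$ is constant and equal to $c(x)$ on the entire ball $B_k(\varphi_k(x))$. It follows that $\roblosse(c',c')=0$, and by the same computation $\roblossc(c',c')=0$.

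Second, I would run robust empirical risk minimisation over $\Cprime$: draw $S'\sim (D')^m$ and output any $h\in\Cprime$ that is \emph{robustly consistent} with $S'$, meaning that for every sample point $\varphi_k(x_i)$ the hypothesis $h$ agrees with the observed label $c(x_i)$ on the whole ball $B_k(\varphi_k(x_i))$. Such an $h$ exists because $c'\in\Cprime$ is robustly consistent by the previous paragraph. Since, for fixed $h$, the robust loss $x'\mapsto\mathbf{1}\set{\exists z\in B_k(x'):h(z)\neq c'(z)}$ is a fixed $\{0,1\}$-valued function of $x'$, the robust risk $\roblosse(h,c')$ is an ordinary expected loss, and the event that $h$ is robustly consistent with $S'$ is exactly the event that this loss vanishes on all $m$ i.i.d. draws.

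Third, I would invoke the standard Occam/realizable argument verbatim. Any ``bad'' hypothesis $h\in\Cprime$ with $\roblosse(h,c')\geq\epsilon$ is robustly consistent with a single draw with probability at most $1-\epsilon$, hence with all of $S'$ with probability at most $(1-\epsilon)^m\leq e^{-\epsilon m}$; a union bound over the at most $\abs{\C_n}$ members of $\Cprime$ shows that the output is bad with probability at most $\abs{\C_n}\,e^{-\epsilon m}$. Setting this below $\delta$ yields the stated $m=O\!\left(\frac{1}{\epsilon}\left(\log\abs{\C_n}+\log\frac{1}{\delta}\right)\right)$. The only place the argument uses the specific risk definition is the realizability check of the first paragraph; since that computation gives $\roblosse(c',c')=\roblossc(c',c')=0$, the identical bound holds under either $\mathsf{R}^E_k$ or $\mathsf{R}^C_k$. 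The one genuine obstacle is therefore the robustness of the majority decoding under a $k$-bit adversary; everything downstream is the classical finite-class bound with the robust loss substituted for the $0/1$ loss.
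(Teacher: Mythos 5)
Your proof is correct, and it rests on exactly the ingredient that drives the paper's own proof: a $k$-bit adversary cannot flip the majority of a block of $2k+1$ equal bits, so on the support of $D'$ the target $c'=c\circ\maj_{2k+1}$ is constant on every ball $B_k(\varphi_k(x))$. The packaging, however, is genuinely different. The paper reduces to the original space: it invokes the textbook finite-class realizable bound to PAC-learn $\C_n$ with $O\left(\frac{1}{\epsilon}\left(\log\abs{\C_n}+\log\frac{1}{\delta}\right)\right)$ examples, then lifts the learned hypothesis $h$ to $h'=h\circ\maj_{2k+1}$ and observes that $h'$ is constant on the relevant balls, so its robust risk (under either definition) equals the standard risk of $h$. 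You instead stay entirely in the lifted space: you verify realizability of the robust loss, run robust ERM over $\Cprime$, and re-run the Occam union-bound argument with the robust loss in place of the $0/1$ loss, using that for fixed $h$ the robust loss is just another $\{0,1\}$-valued function of $x'$. What each buys: the paper's route is a black-box reduction, so it inherits computational efficiency from any efficient PAC learner for $\C_n$ --- the relevant contrast for this section, where the same construction shows robust learning is computationally hard exactly when PAC-learning $\C_n$ is; your route makes explicit a general principle the paper leaves implicit, namely that realizability of the robust loss over a finite hypothesis class yields the classical $O\left(\frac{1}{\epsilon}\left(\log\abs{\H}+\log\frac{1}{\delta}\right)\right)$ bound verbatim, with no compositional structure needed beyond the realizability check. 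A small observation ties the two together: for $h\in\Cprime$, robust consistency at $\varphi_k(x_i)$ is equivalent to ordinary consistency of the underlying concept at the decoded point $x_i$, so your robust ERM, which as stated enumerates entire balls, collapses to the paper's procedure of decoding the sample and running ERM over $\C_n$.
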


\begin{proof}
First note that, since $\C_n$ is finite,  we can use PAC-learning sample bounds for the realizable setting (see for example \cite{mohri2012foundations}) to get that the sample complexity of learning $\C_n$ is $O\left(\frac{1}{\epsilon}(\log\abs{\C_n}+\log\frac{1}{\delta})\right)$.
Now, if we have PAC-learned $\C_n$ with respect to $\D_n$, and $h$ is the hypothesis returned on a sample labeled according to a target concept $c\in\C_n$, we can compose it with the function $\maj_k$ to get a hypothesis $h'$ for which any perturbation of at most $k$ bits of $x'\sim D'$ (where $D'$ is the distribution induced by the target concept $c$ and distribution $D$) will not change $h'(x')$. 
Thus, we also have $k$-robustly learned  $\Cprime$.
\end{proof}

\begin{remark}
The sample complexity in Proposition~\ref{prop:sample-compl} is independent of $k$, and so the construction of the class $\C'$ on $\X'$ allows the adversary to modify $\frac{1}{2n}$ fraction of the bits. 
There are ways to make the adversary more powerful and keep the sample complexity unchanged. 
Indeed, the fraction of the bits the adversary can flip can be increased by using error correction codes.
For example, BCH codes \cite{bose1960class,hocquenghem1959codes} would allow us to obtain an input space $\X'$ of dimension $n+k\log n$ where the adversary can flip $\frac{k}{n+k\log n}$ bits.
\end{remark}

We are now ready to prove the main result of this section.

\begin{proof}[Proof of Theorem~\ref{thm:comp-hardness}]
Given $\C_n$ and $\D$, let $\Cprime$ and $\{\D'_{c'}\}_{c'\in\Cprime}$ be constructed as above. 
Suppose that it is hard to PAC-learn $\C_n$ with respect to the distribution family $\D_n$.
Suppose that we are given an algorithm $\A'$ to $k$-robustly learn $\{(c',D')\}_{D'\in\D'_{c'},c'\in\Cprime}$ and a sample complexity $m$.

Let $\epsilon,\delta>0$ be arbitrary and $c\in \C_n$ be an arbitrary target concept and let $c'\in\Cprime$ be such that $c'=c\circ \maj_{2k+1}$. 
Let $D\in\D_n$ be a distribution on $\X_n$, and let $D'\in\D'_{c'}$ be its induced distribution on $\Xprime$. 
A PAC-learning algorithm for $\C_n$ is as follows. 
Draw a sample $S\sim D^m$ and let $S'=\varphi_k(S)$.
Note that this simulates a sample $S'\sim D'^m$, and that $c'$ will give the same label to all points in the $\rho$-ball centred at $x'$ for any $x'$ in the support of $D'$.

Since $\A'$ $k$-robustly learns the concept-distribution pairs $\{(c',D')\}_{D'\in\D'_{c'},c'\in\Cprime}$, with probability at least $1-\delta$ over $S'$, for any $x\sim D$, we have that $h'$ will be wrong on  $\varphi_k(x)$ (where the last bit is random) with probability at most $\epsilon$. 
So by outputting $h=h'\circ\varphi_k$, we have an algorithm to PAC-learn $\C_n$ with respect to the distribution family $\D_n$. 
\end{proof}

\section{Conclusion}

We have studied robust learnability from a computational learning theory perspective and have shown that efficient robust learning can be hard -- even in very natural and apparently straightforward settings.
We  have moreover given a tight characterization of the strength of an adversary to prevent robust learning of monotone conjunctions under certain distributional assumptions. 
An interesting avenue for future work is to see whether this result can be generalised to other classes of functions.
Finally, we have provided a simpler proof of the previously established result of the computational hardness of robust learning.

In the light of our results, it seems to us that more thought needs to be put
into what we want out of robust learning in terms of computational efficiency
and sample complexity, which will inform our choice of risk functions. Indeed,
at first glance, robust learning definitions that have appeared in prior work
seem in many ways natural and reasonable; however, their inadequacies surface
when viewed under the lens of computational learning theory. Given our negative
results in the context of the current robustness models, one may surmise that
requiring a classifier to be correct in an entire ball near a point is asking
for too much. Under such a requirement, we can only solve ``easy problems''
with strong distributional assumptions.  Nevertheless, it may still be of
interest to study these notions of robust learning in different learning
models, for example where one has access to membership queries.



\bibliographystyle{plainnat}
\bibliography{references}

\begin{thebibliography}{27}
\providecommand{\natexlab}[1]{#1}
\providecommand{\url}[1]{\texttt{#1}}
\expandafter\ifx\csname urlstyle\endcsname\relax
  \providecommand{\doi}[1]{doi: #1}\else
  \providecommand{\doi}{doi: \begingroup \urlstyle{rm}\Url}\fi

\bibitem[Applebaum et~al.(2008)Applebaum, Barak, and Xiao]{ABX:2008}
Benny Applebaum, Boaz Barak, and David Xiao.
\newblock On basing lower-bounds for learning on worst-case assumptions.
\newblock In \emph{Proceedings of the 49th Annual IEEE symposium on Foundations
  of computer science}, 2008.

\bibitem[Awasthi et~al.(2013)Awasthi, Feldman, and Kanade]{awasthi2013learning}
Pranjal Awasthi, Vitaly Feldman, and Varun Kanade.
\newblock Learning using local membership queries.
\newblock In \emph{COLT}, volume~30, pages 1--34, 2013.

\bibitem[Biggio and Roli(2017)]{biggio2017wild}
Battista Biggio and Fabio Roli.
\newblock Wild patterns: Ten years after the rise of adversarial machine
  learning.
\newblock \emph{arXiv preprint arXiv:1712.03141}, 2017.

\bibitem[Biggio et~al.(2013)Biggio, Corona, Maiorca, Nelson, {\v{S}}rndi{\'c},
  Laskov, Giacinto, and Roli]{biggio2013evasion}
Battista Biggio, Igino Corona, Davide Maiorca, Blaine Nelson, Nedim
  {\v{S}}rndi{\'c}, Pavel Laskov, Giorgio Giacinto, and Fabio Roli.
\newblock Evasion attacks against machine learning at test time.
\newblock In \emph{Joint European conference on machine learning and knowledge
  discovery in databases}, pages 387--402. Springer, 2013.

\bibitem[Bose and Ray-Chaudhuri(1960)]{bose1960class}
Raj~Chandra Bose and Dwijendra~K Ray-Chaudhuri.
\newblock On a class of error correcting binary group codes.
\newblock \emph{Information and control}, 3\penalty0 (1):\penalty0 68--79,
  1960.

\bibitem[Bubeck et~al.(2018{\natexlab{a}})Bubeck, Lee, Price, and
  Razenshteyn]{bubeck2018cryptographic}
S{\'e}bastien Bubeck, Yin~Tat Lee, Eric Price, and Ilya Razenshteyn.
\newblock Adversarial examples from cryptographic pseudo-random generators.
\newblock \emph{arXiv preprint arXiv:1811.06418}, 2018{\natexlab{a}}.

\bibitem[Bubeck et~al.(2018{\natexlab{b}})Bubeck, Price, and
  Razenshteyn]{bubeck2018adversarial}
S{\'e}bastien Bubeck, Eric Price, and Ilya Razenshteyn.
\newblock Adversarial examples from computational constraints.
\newblock \emph{arXiv preprint arXiv:1805.10204}, 2018{\natexlab{b}}.

\bibitem[Dalvi et~al.(2004)Dalvi, Domingos, Sanghai, Verma,
  et~al.]{dalvi2004adversarial}
Nilesh Dalvi, Pedro Domingos, Sumit Sanghai, Deepak Verma, et~al.
\newblock Adversarial classification.
\newblock In \emph{Proceedings of the tenth ACM SIGKDD international conference
  on Knowledge discovery and data mining}, pages 99--108. ACM, 2004.

\bibitem[Degwekar and Vaikuntanathan(2019)]{degwekar2019computational}
Akshay Degwekar and Vinod Vaikuntanathan.
\newblock Computational limitations in robust classification and win-win
  results.
\newblock \emph{arXiv preprint arXiv:1902.01086}, 2019.

\bibitem[Diochnos et~al.(2018)Diochnos, Mahloujifar, and
  Mahmoody]{diochnos2018adversarial}
Dimitrios Diochnos, Saeed Mahloujifar, and Mohammad Mahmoody.
\newblock Adversarial risk and robustness: General definitions and implications
  for the uniform distribution.
\newblock In \emph{Advances in Neural Information Processing Systems}, 2018.

\bibitem[Dreossi et~al.(2019)Dreossi, Ghosh, Sangiovanni-Vincentelli, and
  Seshia]{dreossi2019formalization}
Tommaso Dreossi, Shromona Ghosh, Alberto Sangiovanni-Vincentelli, and Sanjit~A
  Seshia.
\newblock A formalization of robustness for deep neural networks.
\newblock \emph{arXiv preprint arXiv:1903.10033}, 2019.

\bibitem[Fawzi et~al.(2016)Fawzi, Moosavi-Dezfooli, and
  Frossard]{fawzi2016robustness}
Alhussein Fawzi, Seyed-Mohsen Moosavi-Dezfooli, and Pascal Frossard.
\newblock Robustness of classifiers: from adversarial to random noise.
\newblock In \emph{Advances in Neural Information Processing Systems}, pages
  1632--1640, 2016.

\bibitem[Fawzi et~al.(2018{\natexlab{a}})Fawzi, Fawzi, and
  Fawzi]{fawzi2018adversarial}
Alhussein Fawzi, Hamza Fawzi, and Omar Fawzi.
\newblock Adversarial vulnerability for any classifier.
\newblock \emph{arXiv preprint arXiv:1802.08686}, 2018{\natexlab{a}}.

\bibitem[Fawzi et~al.(2018{\natexlab{b}})Fawzi, Fawzi, and
  Frossard]{fawzi2018analysis}
Alhussein Fawzi, Omar Fawzi, and Pascal Frossard.
\newblock Analysis of classifiers? robustness to adversarial perturbations.
\newblock \emph{Machine Learning}, 107\penalty0 (3):\penalty0 481--508,
  2018{\natexlab{b}}.

\bibitem[Feldman and Schulman(2012)]{feldman2012data}
Dan Feldman and Leonard~J Schulman.
\newblock Data reduction for weighted and outlier-resistant clustering.
\newblock In \emph{Proceedings of the twenty-third annual ACM-SIAM symposium on
  Discrete Algorithms}, pages 1343--1354. Society for Industrial and Applied
  Mathematics, 2012.

\bibitem[Gilmer et~al.(2018)Gilmer, Metz, Faghri, Schoenholz, Raghu,
  Wattenberg, and Goodfellow]{gilmer2018adversarial}
Justin Gilmer, Luke Metz, Fartash Faghri, Samuel~S Schoenholz, Maithra Raghu,
  Martin Wattenberg, and Ian Goodfellow.
\newblock Adversarial spheres.
\newblock \emph{arXiv preprint arXiv:1801.02774}, 2018.

\bibitem[Hocquenghem(1959)]{hocquenghem1959codes}
Alexis Hocquenghem.
\newblock Codes correcteurs d'erreurs.
\newblock \emph{Chiffres}, 2\penalty0 (2):\penalty0 147--56, 1959.

\bibitem[Koltun and Papadimitriou(2007)]{koltun2007approximately}
Vladlen Koltun and Christos~H Papadimitriou.
\newblock Approximately dominating representatives.
\newblock \emph{Theoretical Computer Science}, 371\penalty0 (3):\penalty0
  148--154, 2007.

\bibitem[Lowd and Meek(2005{\natexlab{a}})]{lowd2005adversarial}
Daniel Lowd and Christopher Meek.
\newblock Adversarial learning.
\newblock In \emph{Proceedings of the eleventh ACM SIGKDD international
  conference on Knowledge discovery in data mining}, pages 641--647. ACM,
  2005{\natexlab{a}}.

\bibitem[Lowd and Meek(2005{\natexlab{b}})]{lowd2005good}
Daniel Lowd and Christopher Meek.
\newblock Good word attacks on statistical spam filters.
\newblock In \emph{CEAS}, volume 2005, 2005{\natexlab{b}}.

\bibitem[Mahloujifar and Mahmoody(2018)]{mahloujifar2018can}
Saeed Mahloujifar and Mohammad Mahmoody.
\newblock Can adversarially robust learning leverage computational hardness?
\newblock \emph{arXiv preprint arXiv:1810.01407}, 2018.

\bibitem[Mahloujifar et~al.(2019)Mahloujifar, Diochnos, and
  Mahmoody]{mahloujifar2019curse}
Saeed Mahloujifar, Dimitrios~I Diochnos, and Mohammad Mahmoody.
\newblock The curse of concentration in robust learning: Evasion and poisoning
  attacks from concentration of measure.
\newblock \emph{AAAI Conference on Artificial Intelligence}, 2019.

\bibitem[Mohri et~al.(2012)Mohri, Rostamizadeh, and
  Talwalkar]{mohri2012foundations}
Mehryar Mohri, Afshin Rostamizadeh, and Ameet Talwalkar.
\newblock \emph{Foundations of machine learning}.
\newblock MIT press, 2012.

\bibitem[Shafahi et~al.(2018)Shafahi, Huang, Studer, Feizi, and
  Goldstein]{shafahi2018adversarial}
Ali Shafahi, W~Ronny Huang, Christoph Studer, Soheil Feizi, and Tom Goldstein.
\newblock Are adversarial examples inevitable?
\newblock \emph{arXiv preprint arXiv:1809.02104}, 2018.

\bibitem[Szegedy et~al.(2013)Szegedy, Zaremba, Sutskever, Bruna, Erhan,
  Goodfellow, and Fergus]{szegedy2013intriguing}
Christian Szegedy, Wojciech Zaremba, Ilya Sutskever, Joan Bruna, Dumitru Erhan,
  Ian Goodfellow, and Rob Fergus.
\newblock Intriguing properties of neural networks.
\newblock In \emph{International Conference on Learning Representations}, 2013.

\bibitem[Tsipras et~al.(2019)Tsipras, Santurkar, Engstrom, Turner, and
  Madry]{tsipras2019robustness}
Dimitris Tsipras, Shibani Santurkar, Logan Engstrom, Alexander Turner, and
  Aleksander Madry.
\newblock Robustness may be at odds with accuracy.
\newblock In \emph{International Conference on Learning Representations}, 2019.

\bibitem[Valiant(1984)]{valiant1984theory}
Leslie~G Valiant.
\newblock A theory of the learnable.
\newblock In \emph{Proceedings of the sixteenth annual ACM symposium on Theory
  of computing}, pages 436--445. ACM, 1984.

\end{thebibliography}
\newpage

\section*{Appendix}
\appendix

\section{Learning Theory Basics}
\subsection{The PAC framework}

We study the problem of robust classification.
This is a generalization of  standard classification tasks, which are defined on an input space $\X_n$ of dimension $n$ and finite output space $\Y$. 
Common examples of input spaces are $\{0,1\}^n$, $[0,1]^n$, and $\R^n$.
We focus on \emph{binary classification} in the \emph{realizable setting}, where $\Y=\{0,1\}$, and we get access to a sample $S=\{(x_i,y_i)\}_{i=1}^m$ where the $x_i$'s are drawn i.i.d. from an unknown underlying distribution $D$, and there exists $c:\X\rightarrow\Y$ such that $y_i=c(x_i)$, namely, there exists a \emph{target concept} that has labeled the sample.
In the PAC framework \cite{valiant1984theory}, our goal is to find a function $h$ that approximates $c$ with high probability over the training sample.
This means we are allowing a small chance of having a sample that is not representative of the distribution. 
As we require our confidence to increase, we require more data. 
PAC learning is formally defined for  \emph{concept classes} $\C_n\subseteq\{0,1\}^{\X_n}$ as follows.

\begin{definition}[PAC Learning]
Let $\C_n$ be a concept class over $\X_n$ and let $\C=\bigcup_{n\in\N}\C_n$.
We say that $\C$ is \emph{PAC learnable using hypothesis class $\H$} and sample complexity function $p(\cdot,\cdot,\cdot)$ if there exists an algorithm $\A$ that satisfies the following:
for all $n\in\N$, for every $c\in\C_n$, for every $D$ over $\X_n$, for every $0<\epsilon<1/2$ and $0<\delta<1/2$, if whenever $\A$ is given access to $m\geq p(n,1/\epsilon,1/\delta)$ examples drawn i.i.d. from $D$ and labeled with $c$, $\A$ outputs $h\in\H$ such that with probability at least $1-\delta$, 
\begin{equation*}
\Pr{x\sim D}{c(x)\neq h(x)}\leq \epsilon\enspace.
\end{equation*}
We say that $\C$ is statistically efficiently PAC learnable if $p$ is polynomial in $n,1/\epsilon$ and $1/\delta$, and computationally efficiently PAC learnable if $\A$ runs in polynomial time in $n,1/\epsilon$ and $1/\delta$.
\end{definition}

PAC learning is \emph{distribution-free}, in the sense that no assumptions are made about the distribution from which the data comes from.
The setting where $\C=\H$ is called \emph{proper learning}, and \emph{improper learning} otherwise.

\subsection{Monotone Conjunctions}

A conjunction $c$ over $\{0,1\}^n$ can be represented a set of literals $l_1,\dots,l_k$, where, for $x\in\X_n$, $c(x)=\bigwedge_{i=1}^k l_i$. 
For example, $c(x)=x_1\wedge\bar{x_2}\wedge{x_5}$ is a conjunction.
Monotone conjunctions are the subclass of conjunctions where negations are not allowed, i.e. all literals are of the form $l_i=x_j$ for some $j\in[n]$.

The standard PAC learning algorithm to learn monotone conjunctions is as follows.
We start with the hypothesis $h(x)=\bigwedge_{i\in I_h} x_i$, where $I_h=[n]$. 
For each example $x$ in $S$, we remove $i$ from $I_h$ if $c(x)=1$ and $x_i=0$. 

When one has access to membership queries, one can easily exactly learn monotone conjunctions over the whole input space: we start with the instance where all bits are 1 (which is always a positive example), and we can test whether each variable is in the target conjunction by setting the corresponding bit to 0 and requesting the label.

We refer the reader to \cite{mohri2012foundations} for an in-depth introduction to machine learning theory.

\subsection{Log-Lipschitz Distributions}
\label{app:log-lipschitz}

\begin{definition}
A distribution  $D$ on $\boolhc$ is said to be $\alpha$-$\log$-Lipschitz if 
for all input points $x,x'\in \boolhc$, if $d_H(x,x')=1$, then $|\log(D(x))-\log(D(x'))|\leq\log(\alpha)$.
\end{definition}

The intuition behind $\log$-Lipschitz distributions is that points 
that are close to each other must not have frequencies that greatly 
differ from each other. Note that, by definition, $D(x)>0$ for all inputs $x$.
Moreover,  the uniform distribution is $\log$-Lipschitz
with parameter $\alpha=1$. Another example of $\log$-Lipschitz 
distributions is the class of product distributions where the probability
of drawing a $0$ (or equivalently a $1$) at index $i$ is in the interval 
$\left[\frac{1}{1+\alpha},\frac{\alpha}{1+\alpha}\right]$. 
Log-Lipschitz distributions have been studied in \cite{awasthi2013learning}, 
and its variants in \cite{feldman2012data,koltun2007approximately}.

Log-Lipschitz distributions have the following useful properties, which 
we will often refer to in our proofs.
\begin{lemma}
\label{lemma:log-lips-facts}
Let $D$ be an $\alpha$-$\log$-Lipschitz distribution over $\boolhc$. 
Then the following hold:
\begin{enumerate}
\item\label{test} For $b\in\{0,1\}$, $\frac{1}{1+\alpha}\leq \Pr{x\sim D}{x_i=b}\leq\frac{\alpha}{1+\alpha}$.
\item For any $S\subseteq[n]$, the marginal distribution $D_{\bar{S}}$ is $\alpha$-$\log$-Lipschitz, where $D_{\bar{S}}(y)=\sum_{y'\in\{0,1\}^S} D(yy')$.
\item For any $S\subseteq[n]$ and for any property $\pi_S$ that only depends on variables $x_S$, the marginal with respect to $\bar{S}$ of the conditional distribution $(D|\pi_S)_{\bar{S}}$ is $\alpha$-$\log$-Lipschitz.
\item For any $S\subseteq[n]$ and $b_S\in\{0,1\}^S$,  we have that $\left(\frac{1}{1+\alpha}\right)^{|S|}\leq \Pr{x\sim D}{x_i=b}\leq\left(\frac{\alpha}{1+\alpha}\right)^{|S|}$.
\end{enumerate}
\end{lemma}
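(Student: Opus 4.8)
The plan is to prove the four properties in order, since each rests on the previous ones, and all follow from the single equivalent form of the $\log$-Lipschitz condition: whenever $d_H(x,x')=1$ we have $D(x)\le \alpha D(x')$ and $D(x')\le\alpha D(x)$. For Part~\ref{test}, I would fix a coordinate $i$ and pair each point $y$ with $y_i=0$ to the point obtained by flipping bit $i$ to $1$; this is a bijection between $\set{x:x_i=0}$ and $\set{x:x_i=1}$, and the two paired points are at Hamming distance $1$. Summing the termwise bound $D(y)\le \alpha D(y')$ over all such pairs yields $\Pr{x\sim D}{x_i=0}\le\alpha\,\Pr{x\sim D}{x_i=1}$, and the reverse inequality holds symmetrically. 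Combining either inequality with $\Pr{x\sim D}{x_i=0}+\Pr{x\sim D}{x_i=1}=1$ gives the claimed two-sided bound.

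For Parts~2 and~3, the key observation is that appending a fixed string to two adjacent points keeps them adjacent: if $y,y'\in\{0,1\}^{\bar S}$ satisfy $d_H(y,y')=1$, then for any fixed $w\in\{0,1\}^S$ the points $yw$ and $y'w$ differ only in the single coordinate where $y,y'$ differ, hence $D(yw)\le\alpha D(y'w)$. For Part~2 I would sum this termwise bound over all $w\in\{0,1\}^S$ to obtain $D_{\bar S}(y)\le\alpha D_{\bar S}(y')$ (and symmetrically), which is exactly the $\log$-Lipschitz property of the marginal. Part~3 is the same argument with the sum restricted: since $\pi_S$ depends only on the coordinates in $S$, the event $\set{\pi_S \text{ holds}}$ corresponds to a fixed set $W\subseteq\{0,1\}^S$ of admissible $S$-assignments, independent of the $\bar S$-coordinates, so $(D|\pi_S)_{\bar S}(y)$ is proportional to $\sum_{w\in W}D(yw)$. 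Summing the termwise ratio bound over $w\in W$, and noting that the common normalising constant $\Pr{x\sim D}{\pi_S}$ cancels, gives the result.

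Finally, for Part~4 I would induct on $|S|$, the base case $|S|=1$ being exactly Part~\ref{test}. For the inductive step, pick $i\in S$, set $S'=S\setminus\set{i}$, and factor
\[
\Pr{x\sim D}{x_S=b_S}=\Pr{x\sim D}{x_{S'}=b_{S'}}\cdot\Pr{x\sim D}{x_i=b_i\mid x_{S'}=b_{S'}}\enspace.
\]
The first factor is controlled by the inductive hypothesis. For the second, I would condition on the event $x_{S'}=b_{S'}$ (a property depending only on the coordinates in $S'$) and then marginalise down to the single coordinate $i$; by Parts~3 and~2 the resulting single-coordinate distribution is still $\alpha$-$\log$-Lipschitz, so Part~\ref{test} bounds this factor inside $[\frac{1}{1+\alpha},\frac{\alpha}{1+\alpha}]$. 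Multiplying the $|S|$ factors yields the stated product bound.

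I expect no genuine obstacle here, as all four claims reduce to the termwise ratio bound together with the fact that summation preserves inequalities. The only point requiring care is the bookkeeping in Parts~3 and~4: one must verify that conditioning on an $S$-measurable event and then marginalising preserves the $\log$-Lipschitz constant \emph{exactly} rather than degrading it, which is precisely what allows Part~\ref{test} to be reapplied coordinate-by-coordinate in the induction without any loss in the constant.
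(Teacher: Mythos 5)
Your proposal is correct and takes essentially the same approach as the paper: parts 1--3 use exactly the same termwise ratio bounds (pairing points via a single bit flip, appending a fixed suffix in $\{0,1\}^S$, and cancelling the normalising constant $\Pr{x\sim D}{\pi_S}$). For part 4 the paper merely states that it is a corollary of (1)--(3); your coordinate-by-coordinate induction via the chain rule is precisely the argument being left implicit, so no substantive difference arises there either.
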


\begin{proof}

To prove (1), fix $i\in[n]$ and $b\in\{0,1\}$ and denote by $x^{\oplus i}$ the result of flipping the $i$-th bit of $x$. Note that

\begin{equation*}
\Pr{x\sim D}{x_i=b}
=\sum_{\substack{z\in\boolhc :\\ z_i=b}} D(z)
=\sum_{\substack{z\in\boolhc :\\ z_i=b}} \frac{D(z)}{D(z^{\oplus i})}D(z^{\oplus i})
\leq\alpha\sum_{\substack{z\in\boolhc :\\ z_i=b}} D(z^{\oplus i})
=\alpha\Pr{x\sim D}{x_i\neq b}
\enspace.
\end{equation*}
The result follows from solving for $\Pr{x\sim D}{x_i=b}$.

Without loss of generality, let $\bar{S}=\{1,\dots,k\}$ for some $k\leq n$. 
Let $x,x'\in \{0,1\}^{\bar{S}}$ with $d_H(x,x')=~1$.

To prove (2), let $D_{\bar{S}}$ be the marginal distribution. Then,
\begin{equation*}
D_{\bar{S}}(x)
=\sum_{y\in\{0,1\}^S}D(xy)
=\sum_{y\in\{0,1\}^S}\frac{D(xy)}{D(x'y)}D(x'y)
\leq \alpha \sum_{y\in\{0,1\}^S} D(x'y)
= \alpha D_{\bar{S}}(x')
\enspace.
\end{equation*}

To prove (3), denote by $X_{\pi_S}$ the set of points in $\{0,1\}^S$ satisfying property $\pi_S$, and by $xX_{\pi_S}$ the set of inputs of the form $xy$, where $y\in X_{\pi_S}$.  
By a slight abuse of notation, let $D(X_{\pi_S})$ be the probability of drawing a point in $\boolhc$ that satisfies $\pi_S$.
Then,
\begin{equation*}
D(xX_{\pi_S})
= \sum_{y\in X_{\pi_S}} D(xy)
= \sum_{y\in X_{\pi_S}} \frac{D(xy)}{D(x'y)} D(x'y)
\leq  \alpha \sum_{y\in X_{\pi_S}} D(x'y )
= \alpha D(x'X_{\pi_S})
\enspace.
\end{equation*}
We can use the above and show that
\begin{equation*}
(D|\pi_S)_{\bar{S}} (x)
= \frac{D(xX_{\pi_S}) }{D(x'X_{\pi_S})} \frac{D(x'X_{\pi_S})}{D(X_{\pi_S})}
\leq \alpha (D|\pi_S)_{\bar{S}} (x')\enspace.
\end{equation*}


Finally, (4) is a corollary of (1)--(3).

\end{proof}

\section{Discussion on the Relationship between Robust and Zero-Risk Learning}
\label{app:zero-risk-learning}

We saw that, for both robust risks $\roblossc$ and $\roblosse$, zero-risk learning does not necessarily imply robust learning. 
Moreover, as shown in Section~\ref{sec:no-df-rl}, efficient distribution-free robust learning is not possible even in the realizable setting.
What can be said if we have access to a robust learning algorithm for a specific distribution on the boolean hypercube?
We will show that distribution-dependent robust learning implies zero-risk learning for both robust risk definitions, under certain conditions on the measure of balls in the support of the distribution.
Let us start with Definition~\ref{def:loss-correct}, where we require the hypothesis to be exact in the $\rho$-balls around a point.

\begin{proposition} 
\label{prop:robust-loss-threshold-1}
For any probability measure $\mu$ on $\boolhc$, robustness parameter $\rho$ and concepts $h,c$, there exists $\epsilon>0$ such that if $\roblosse(h,c)<\epsilon$ then $h(x)=c(x)$ for any $x\in\X$ such that $\mu(B_\rho(x))>0$.
In particular, one has that $h$ and $c$ agree on the support of $\mu$.
\end{proposition}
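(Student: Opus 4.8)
The plan is to argue by contraposition, exploiting the finiteness of $\boolhc$ to extract a single uniform threshold $\epsilon$. The central observation is a lower bound on $\roblosse(h,c)$ coming from any one point of disagreement. Suppose $x_0\in\X$ satisfies $h(x_0)\neq c(x_0)$. Because the Hamming metric is symmetric, $x_0\in B_\rho(x)$ if and only if $x\in B_\rho(x_0)$, so for every $x\in B_\rho(x_0)$ the witness $z=x_0$ already shows that $\exists z\in B_\rho(x):h(z)\neq c(z)$. Integrating over $\mu$ therefore gives
\begin{equation*}
\roblosse(h,c)\geq\Pr{x\sim\mu}{x\in B_\rho(x_0)}=\mu(B_\rho(x_0))\enspace.
\end{equation*}
In particular, if additionally $\mu(B_\rho(x_0))>0$, then $\roblosse(h,c)$ is bounded away from zero by the mass of that ball.

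Next I would choose $\epsilon$ so that this lower bound forces a contradiction whenever a relevant disagreement exists. Since $\X=\boolhc$ is finite, the set $\{\mu(B_\rho(x)):x\in\X,\ \mu(B_\rho(x))>0\}$ is a finite, nonempty collection of strictly positive reals (nonempty because $\mu$ is a probability measure, so at least one ball carries positive mass). I would set $\epsilon$ equal to its minimum, which is strictly positive and depends only on $\mu$ and $\rho$.

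With $\epsilon$ fixed, assume $\roblosse(h,c)<\epsilon$ and, for contradiction, that some $x_0$ satisfies $h(x_0)\neq c(x_0)$ and $\mu(B_\rho(x_0))>0$. The displayed inequality then yields $\roblosse(h,c)\geq\mu(B_\rho(x_0))\geq\epsilon$, contradicting the hypothesis; hence $h(x)=c(x)$ for every $x$ with $\mu(B_\rho(x))>0$. For the final claim, any $x\in\supp(\mu)$ has $\mu(\{x\})>0$ on the finite hypercube, and since $x\in B_\rho(x)$ we get $\mu(B_\rho(x))\geq\mu(\{x\})>0$; therefore $h$ and $c$ agree on $\supp(\mu)$.

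I expect no serious obstacle: the argument is elementary once the symmetry $x\in B_\rho(x_0)\iff x_0\in B_\rho(x)$ is noticed. The only point needing care is that the uniform threshold $\epsilon$ must be obtained as a minimum over the finitely many positive ball-masses, which is precisely where finiteness of $\boolhc$ enters; over an infinite space one would instead take an infimum and would need a separate argument to guarantee it is positive.
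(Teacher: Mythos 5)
Your proof is correct and follows essentially the same route as the paper's: the same symmetry observation (a disagreement point $x_0$ serves as a witness for every $x\in B_\rho(x_0)$, giving $\roblosse(h,c)\geq\mu(B_\rho(x_0))$) and the same choice of $\epsilon$ as the minimum of the finitely many positive ball masses. Your explicit treatment of the support claim via $\mu(B_\rho(x))\geq\mu(\set{x})>0$ is a small addition the paper leaves implicit, but the argument is the same.
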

\begin{proof}
Suppose there exists $x^*\in\X$ with $\mu(B_\rho(x^*))>0$ such that $h(x^*)\neq c(x^*)$. 
Then for any $z\in B_\rho(x^*)$, we have that $\roblosse(h,c,z)$, the robust risk of $h$ with respect to $c$ at point $z$, is 1.
Let $\tilde{\X}:=\set{x\in\X:\mu\left(B_\rho(x)\right)>0}$, and  $\epsilon=\min_{x\in\tilde{\X}}\mu(B_\rho(x))$. 
We have that
\begin{align*}
\roblosse(h,c)
&\geq\sum_{z\in B_\rho(x^*)}\mu(\set{z})\ell_\rho^R(h,c,z)
=\mu(B_\rho(x^*))
\geq\epsilon
\enspace.
\end{align*} 
\end{proof}

\begin{corollary}
\label{cor:robust-to-exact-1}
For any fixed distribution $D$, robust learning with respect to $D$ implies zero-risk learning with respect to $D$ for any robustness parameter as long as $\epsilon$ in Proposition~\ref{prop:robust-loss-threshold-1} satisfies $\epsilon^{-1}=\poly(n)$.
\end{corollary}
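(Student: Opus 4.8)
The plan is to prove Corollary~\ref{cor:robust-to-exact-1} by showing that a robust learning algorithm, when run with a suitably small accuracy parameter, forces the output hypothesis to agree with the target on the support of $D$ --- which is exactly zero-risk learning. The key link is Proposition~\ref{prop:robust-loss-threshold-1}, which guarantees a threshold $\epsilon>0$ (depending on $D$ and $\rho$) below which vanishing robust risk $\roblosse(h,c)<\epsilon$ implies pointwise agreement $h(x)=c(x)$ on every $x$ whose $\rho$-ball has positive measure, and in particular on $\supp(D)$.

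First I would fix a target concept $c$, a distribution $D$, and a confidence parameter $\delta$, and let $\epsilon_0>0$ be the threshold from Proposition~\ref{prop:robust-loss-threshold-1}, namely $\epsilon_0=\min_{x\in\tilde\X}\mu(B_\rho(x))$ where $\tilde\X=\set{x\in\X:\mu(B_\rho(x))>0}$. By hypothesis the robust learning algorithm has polynomial sample complexity $m\leq\poly(1/\epsilon,1/\delta,n)$ and, when fed $m$ examples, outputs $h$ with $\Pr{S\sim D^m}{\roblosse(h,c)<\epsilon}>1-\delta$. I would invoke the algorithm with accuracy parameter set to $\epsilon_0$ (or any value strictly below it). Then with probability at least $1-\delta$ over the sample we have $\roblosse(h,c)<\epsilon_0$, and Proposition~\ref{prop:robust-loss-threshold-1} immediately yields $h(x)=c(x)$ for all $x\in\supp(D)$, i.e.\ $\Pr{x\sim D}{h(x)\neq c(x)}=0$. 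This is precisely zero-risk learning with confidence $1-\delta$.

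The only subtlety --- and the reason for the hypothesis $\epsilon_0^{-1}=\poly(n)$ --- is efficiency. Calling the robust learner with accuracy $\epsilon_0$ costs $\poly(1/\epsilon_0,1/\delta,n)$ samples, and this is polynomial in $n$ precisely when $1/\epsilon_0$ is polynomially bounded. Since $\epsilon_0$ is the minimum positive ball-measure, it can in principle be exponentially small (or worse), which would blow up the sample complexity; the stated condition rules this out. I expect this bookkeeping about the dependence of $m$ on $1/\epsilon_0$ to be the only real point requiring care, and it is where the main obstacle lies: translating the existential threshold of Proposition~\ref{prop:robust-loss-threshold-1} into a genuinely \emph{efficient} reduction rather than merely an information-theoretic one. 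Everything else follows directly by substitution into the definition of robust learning and an application of Proposition~\ref{prop:robust-loss-threshold-1}.
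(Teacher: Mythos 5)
Your proposal is correct and follows essentially the same route as the paper's proof: invoke the robust learner with accuracy set to the threshold $\epsilon$ from Proposition~\ref{prop:robust-loss-threshold-1}, conclude pointwise agreement of $h$ and $c$ on the support of $D$ (hence zero standard risk) with probability at least $1-\delta$, and use the hypothesis $\epsilon^{-1}=\poly(n)$ to keep the sample complexity polynomial. Your treatment of the efficiency bookkeeping is, if anything, slightly more explicit than the paper's.
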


\begin{proof}
Fix a distribution $D\in\D$ on $\X$. 
Suppose that we have a $\rho$-robust learning algorithm $\A_\F^R(D)$ for $\F$, namely for all $\epsilon,\delta,\rho>0$, for all $c\in\F$, if $\A_\F^R(D)$ has access to a sample $S$ of size $m\geq \poly(\frac{1}{\epsilon},\frac{1}{\delta},\text{size}(c),n)$, it returns $f\in\F$ such that 
\begin{equation}
\Pr{S\sim D^m}{\ell_\rho^R(f,c)<\epsilon}\geq 1-\delta
\enspace.
\end{equation}

By Proposition~\ref{prop:robust-loss-threshold-1}, we can choose $\epsilon$ such that $\roblosse(h,c)<\epsilon$ implies that $h(x)=c(x)$ for any $x\in\X$ such that $\mu(B_\rho(x))>0$. 
Note that this $\epsilon$ depends on $D$, $\rho$ and $n$.
So we have that 

\begin{equation}
\Pr{x\sim D}{f(x)\neq c(x)}=0
\enspace,
\end{equation}
with probability at least $1-\delta$ over the training sample $S$, whose size remains polynomial in $\frac{1}{\delta}$ and $n$ by the proposition assumptions.
\end{proof}

\begin{remark}
The assumption on  $\epsilon$ in Corollary~\ref{cor:robust-to-exact-1} is necessary to use the robust learning algorithm as a black box: in Section~\ref{sec:rob-learning-unif}, we work under a well-behaved class of distributions that includes the uniform distribution and show that, for long enough monotone conjunctions and small enough robustness parameter (with respect to the conjunction length), efficient robust learning is possible.
However, we cannot exactly learn these monotone conjunctions.
In the uniform distribution setting, the $\rho$-balls all have the same probability mass and $\epsilon^{-1}$ is essentially superpolynomial in $n$. 
\end{remark}

To show the same result  for $\roblossc$, where the hypothesis is constant in a ball, we can use the exact same reasoning as in Corollary~\ref{cor:robust-to-exact-1}, except that we need to show the analogue of Proposition~\ref{prop:robust-loss-threshold-1} for this setting.

\begin{proposition}
For any probability measure $\mu$ on $\boolhc$ and for any concepts $h,c$, there exists $\epsilon>0$ such that if $\roblossc(h,c)<\epsilon$ then $h$ and $c$ agree on the support of $\mu$.
\end{proposition}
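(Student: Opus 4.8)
The plan is to mirror the argument for Proposition~\ref{prop:robust-loss-threshold-1}, but to exploit the fact that the constant-in-the-ball risk is even easier to lower bound at a point of disagreement. The key observation is that the centre of a ball always lies in the ball: since $\rho\geq 0$ we always have $x\in B_\rho(x)$. This means a single-point disagreement between $h$ and $c$ immediately activates the constant-in-the-ball predicate at that very point, with no need to propagate through a ball expansion.

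First I would argue by contraposition. Suppose $h$ and $c$ disagree at some point $x^*\in\supp(\mu)$, so that $\mu(\{x^*\})>0$ and $h(x^*)\neq c(x^*)$. Because $x^*\in B_\rho(x^*)$, the witness $z=x^*$ already satisfies the predicate $\exists z\in B_\rho(x^*):h(z)\neq c(x^*)$ that defines $\roblossc$ at $x^*$, since $h(x^*)\neq c(x^*)$. Hence the point $x^*$ contributes its full mass to the risk, giving $\roblossc(h,c)\geq\mu(\{x^*\})$.

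Next I would choose the threshold uniformly over all possible points of disagreement. Since $\boolhc$ is finite, the support $\supp(\mu)$ is a finite set of points each of strictly positive mass, so $\epsilon:=\min_{x\in\supp(\mu)}\mu(\{x\})$ is well defined and strictly positive. For this $\epsilon$, any disagreement at a support point forces $\roblossc(h,c)\geq\epsilon$. Taking the contrapositive yields exactly the claim: $\roblossc(h,c)<\epsilon$ implies that $h$ and $c$ agree on $\supp(\mu)$.

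I do not anticipate a genuine obstacle here, as the statement is strictly simpler than its exact-in-the-ball counterpart. The only subtlety worth flagging is that, unlike in Proposition~\ref{prop:robust-loss-threshold-1}, where one must pass through the ball $B_\rho(x^*)$ and bound the ball mass $\mu(B_\rho(x^*))$, here no ball expansion is needed: the relevant quantity is simply the atom $\mu(\{x^*\})$, and consequently the admissible $\epsilon$ is the minimum atom mass on the support rather than a minimum ball mass.
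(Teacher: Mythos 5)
Your proposal is correct and follows essentially the same route as the paper's proof: both choose $\epsilon=\min_{x\in\supp(\mu)}\mu(\set{x})$ and argue by contraposition that a disagreement at a support point $x^*$ forces $\roblossc(h,c)\geq\mu(\set{x^*})\geq\epsilon$. Your explicit remark that $x^*\in B_\rho(x^*)$ supplies the witness $z=x^*$ is a useful clarification of a step the paper leaves implicit, but it is not a different argument.
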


\begin{proof}
Fix $h,c,D$ and let $\epsilon=\min_{x\in\supp(\mu)}\mu(\set{x})$.
Suppose there exists $x^*\in\supp(\mu)$ and $z\in B_\rho(x^*)$ such that $c(x^*)\neq h(z)$. Then
\begin{equation*}
\roblossc(h,c)
=\Pr{x\sim \mu}{\exists z\in B_\rho(x)\;.\; c(x)\neq h(z)}
\geq\epsilon
\enspace.
\end{equation*}
\end{proof}

\section{Proofs from Section~\ref{sec:no-df-rl}}
\label{app:no-df-rl}

\begin{proof}[Proof of Theorem~\ref{thm:no-df-rl}]
First, if $\C$ is trivial, we need at most one example to identify the target function.

For the other direction, suppose that $\C$ is non-trivial.
We first start by fixing any learning algorithm and polynomial sample complexity function $m$. 
Let $\eta=\frac{1}{2^{\omega(\log n)}}$, $0<\delta<\frac{1}{2}$, and note that for any constant $a>0$,
\begin{equation*}
\underset{n\rightarrow\infty}{\lim}\;
n^a\log(1-\eta)^{-1}=0
\enspace,
\end{equation*}
and so any polynomial in $n$ is $o\left(\left(\log(1/(1-\eta))\right)^{-1}\right)$.
Then it is possible to choose $n_0$ such that for all $n\geq n_0$, 
\begin{equation}
\label{eqn:sample-size-ub}
m\leq \frac{\log(1/\delta)}{2n\log(1-\eta)^{-1}}
\enspace.
\end{equation}

Since $\C$ is non-trivial, we can choose concepts $c_1,c_2\in \C_{n}$ and points $x,x'\in\boolhc$ such that $c_1$ and $c_2$ agree on $x$ but disagree on $x'$.
This implies that there exists a point $z\in\boolhc$ such that (i)~$c_1(z)=c_2(z)$ and (ii)~it suffices to change \emph{only one bit} in $I:=I_{c_1}\cup I_{c_2}$ to cause $c_1$ to disagree on $z$ and its perturbation.
Let $D$ be such that
\begin{align*}
&\Pr{x\sim D}{x_i=z_i}=
\begin{cases}
	1-\eta &\quad\text{if }i\in I\\
	\frac{1}{2}&\quad\text{otherwise}
\end{cases}
\enspace.
\end{align*}

Draw a sample $S\sim D^m$ and label it according to $c\sim U(c_1,c_2)$. 
Then,
\begin{equation}
\label{eqn:same-label-sample-2}
\Pr{S\sim D^m}{\forall x\in S\quad c_1(x)=c_2(x)}
\geq\left(1-\eta\right)^{m| I |}
\enspace.
\end{equation}
Bounding the RHS below by $\delta>0$, we get that, as long as 
\begin{equation}
\label{eqn:sample-size-2}
m\leq \frac{\log(1/\delta)}{| I |\log(1-\eta)^{-1}}
\enspace,
\end{equation}
(\ref{eqn:same-label-sample-2}) holds with probability at least $\delta$. 
But this is true as Equation~(\ref{eqn:sample-size-ub}) holds as well.
However, if $x=z$, then it suffices to flip one bit of $x$ to get $x'$ such that $c_1(x')\neq c_2(x')$. 
Then,
\begin{equation}
\roblosse(c_1,c_2)\geq\Pr{x\sim D}{x_{I}=z_{I}}=\left(1-\eta\right)^{| I |}
\enspace.
\end{equation}
The constraints on $\eta$ and the fact that $| I |\leq n$ are sufficient to guarantee that the RHS is $\Omega(1)$.
Let $\alpha>0$ be a constant such that  $\roblosse(c_1,c_2)\geq\alpha$.

We can use the same reasoning as in Lemma~\ref{lemma:robloss-triangle} to argue that, for any $h\in\set{0,1}^\X$, 
\begin{equation*}
 \mathsf{R}^E_1(c_1,h)+ \mathsf{R}^E_1(c_2,h)
\geq  \mathsf{R}^E_1(c_1,c_2) \enspace.
\end{equation*}
Finally, we can show that
\begin{equation*}
\underset{c\sim U(c_1,c_2)}{\mathbb{E}}\eval{S\sim D^m}{ \mathsf{R}^R_1(h,c)} \geq \alpha\delta/2 ,
\end{equation*}
hence there exists a target $c$ with expected robust risk bounded below by a constant\footnote{For a more detailed reasoning, we refer the reader to the proof of Theorem~\ref{thm:mon-conj}, where we bound the expected value $\eval{c,S}{\roblosse(\A(S),c)}$ of the robust risk of a target chosen at uniformly random and the hypothesis outputted by a learning algorithm $\A$ on a sample $S$.}.
\end{proof}

\section{Proofs from Section~\ref{sec:mon-conj}}
\label{app:mon-conj}

\subsection{Proof of Lemma~\ref{lemma:concepts-agree}}
\label{app:concepts-agree}
\begin{proof}

We begin by bounding the probability that $c_1$ and $c_2$ agree on an i.i.d. sample of size $m$:
\begin{equation}
\label{eqn:zero-label-sample}
\Pr{S\sim D^m}{\forall x\in S  \cdot c_1(x)=c_2(x)=0}
=\left(1-\frac{1}{2^l}\right)^{2m}
\enspace.
\end{equation}
Bounding the RHS below by $1/2$, we get that, as long as 
\begin{equation}
\label{eqn:sample-size}
m\leq \frac{\log(2)}{2\log(2^l/(2^l-1))}
\enspace,
\end{equation}
(\ref{eqn:zero-label-sample}) holds with probability at least $1/2$.

Now, if $l=\omega(\log(n))$, then for a constant $a>0$,
\begin{equation*}
\underset{n\rightarrow\infty}{\lim}\;
n^a\log\left(\frac{2^l}{2^l-1}\right)=0
\enspace,
\end{equation*}
and so any polynomial in $n$ is $o\left(\left(\log\left(\frac{2^l}{2^l-1}\right)\right)^{-1}\right)$.
\end{proof}

\subsection{Proof of Theorem~\ref{thm:mon-conj-rob}}

\begin{proof}

  We show that the algorithm $\mathcal{A}$ for PAC-learning monotone
  conjunctions (see \cite{mohri2012foundations}, chapter 2) is a robust learner for an appropriate choice
  of sample size.  We start with the hypothesis $h(x)=\bigwedge_{i\in I_h} x_i$, 
  where $I_h=[n]$. For each example $x$ in $S$, we remove $i$ 
  from $I_h$ if $c(x)=1$ and $x_i=0$.

Let $\D$ be a class of $\alpha$-$\log$-Lipschitz distributions. 
 Let $n\in\mathbb{N}$ and $D\in \mathcal{D}_n$.  Suppose moreover
  that the target concept $c$ is a conjunction of $l$ variables.
  Fix $\varepsilon,\delta>0$. Let $\eta=\frac{1}{1+\alpha}$, 
  and note that by Lemma~\ref{lemma:log-lips-facts},
  for any $S\subseteq[n]$ and $b_S\in\{0,1\}^S$,  we have that 
  $\eta^{|S|}\leq \Pr{x\sim D}{x_i=b}\leq(1-\eta)^{|S|}$.

\paragraph{Claim 1.} 
If
$m \geq \left\lceil \frac{\log n-\log \delta}{\eta^{l+1}}
\right\rceil$ then given a sample $S \sim D^m$, algorithm
$\mathcal{A}$ outputs $c$ with probability at least $1-\delta$.

  \emph{Proof of Claim 1.} Fix $i \in \{1,\ldots,n\}$.  Algorithm
  $\mathcal{A}$ eliminates $i$ from the output hypothesis just in
  case there exists $x\in S$ with $x_{i}=0$ and $c(x)=1$.  Now we
  have $\Pr{x\sim D}{x_{i}=0 \wedge c(x)=1}\geq \eta^{l+1}$
  and hence
\[
\Pr{S\sim D}{\forall  x \in S \cdot i\text{ remains in }I_h}\leq 
(1-\eta^{l+1})^m 
 \leq  e^{-m\eta^{l+1}}
 =  \frac{\delta}{n} \, .
\]
The claim now follows from union bound over $i \in \{1,\ldots,n\}$.

\paragraph{Claim 2.}  If $l \geq \frac{8}{\eta^2}\log(\frac{1}{\varepsilon})$
and $\rho \leq \frac{\eta l}{2}$ then
$\Pr{x\sim D} {\exists z \in B_\rho(x) \cdot c(z)=1}\leq \varepsilon$.

\emph{Proof of Claim 2.}  
Define a random variable $Y=\sum_{i\in I_c} \mathbb{I}(x_i=1)$.  
We simulate $Y$ by the following process. 
Let $X_1,\dots,X_l$ be random variables taking value in $\{0,1\}$, and which may be dependent. 
Let $D_i$ be the marginal distribution on $X_i$ conditioned on $X_1,\dots,X_{i-1}$. 
This distribution is also $\alpha$-$\log$-Lipschitz by Lemma~\ref{lemma:log-lips-facts}, and hence,
\begin{equation}
\label{eqn:marg-bound}
\Pr{X_i\sim D_i}{X_i=1}\leq 1-\eta
\enspace.
\end{equation}



Since we are interested in the random variable $Y$ representing the number of 1's in $X_1,\dots,X_l$, 
we define the random variables $Z_1,\dots,Z_l$ as follows:
\begin{equation*}
Z_k = \left(\sum_{i=1}^k X_i\right)-k(1-\eta)\enspace.
\end{equation*} 
The sequence $Z_1, \dots, Z_l$ is a supermartingale with respect to $X_1,\dots,X_l$:
\begin{align*}
\eval{}{Z_{k+1}\given X_1,\dots,X_k}
&=\eval{}{Z_{k}+X_{k+1}'-(1-\eta)\given X'_1,\dots,X'_k}\\
&=Z_k+\Pr{}{X_{k+1}'=1\given X'_1,\dots,X'_k}-(1-\eta)\\
&\leq Z_k
\enspace. \tag{by (\ref{eqn:marg-bound})}
\end{align*}
Now, note that all $Z_k$'s satisfy $|Z_{k+1}-Z_k|\leq 1$, and that $Z_l=Y-l(1-\eta)$. 
We can thus apply the Azuma-Hoeffding (A.H.) Inequality to get 
\begin{align*}
\Pr{}{Y\geq l-\rho}
&\leq \Pr{}{Y\geq l(1-\eta)+\sqrt{2\ln(1/\varepsilon)l}}	\\
&=\Pr{}{Z_l-Z_0\geq \sqrt{2\ln(1/\varepsilon)l}}	\\
&\leq \exp\left(-\frac{\sqrt{2\ln(1/\varepsilon)l}^2}{2l}\right)				\tag{A.H.}\\
&=\varepsilon
\enspace,
\end{align*}
where the first inequality holds from the given bounds on $l$ and $\rho$:
\begin{align*}
l-\rho &=(1-\eta)l + \frac{\eta l}{2} 
               + \frac{\eta l}{2} - \rho \\
          & \geq (1-\eta) l + \frac{\eta l}{2} 
           \tag{since $\rho \leq \frac{\eta l}{2}$}\\
          & \geq  (1-\eta) l + \sqrt{2\log(1/\varepsilon) l} \enspace.
            \tag{since $l \geq \frac{8}{\eta^2}\log(\frac{1}{\varepsilon})$}
\end{align*}
This completes the proof of Claim 2.

We now combine Claims 1 and 2 to prove the theorem.  Define
$l_0 := \max(\frac{2}{\eta}\log n,
\frac{8}{\eta^2}\log(\frac{1}{\varepsilon}))$.  Define
$m:=\left\lceil \frac{\log n-\log \delta}{\eta^{l_0+1}}
\right\rceil$.  Note that $m$ is polynomial in $n$, $\delta$,
$\varepsilon$.  

Let $h$ denote the output of algorithm $\mathcal{A}$ given a sample
$S\sim D^m$.  We consider two cases.  If $l \leq l_0$ then, by
Claim 1, $h=c$ (and hence the robust risk is $0$) with probability at
least $1-\delta$.  If $l_0 \leq l$ then, since $\rho=\log n$, we
have $l \geq \frac{8}{\eta^2}\log(\frac{1}{\varepsilon})$ and
$\rho \leq \frac{\eta l}{2}$ and so we can apply Claim 2.  By Claim
2 we have
\[ \roblosse(h,c) \leq \Pr{x\sim D}{\exists z \in B_\rho(x) \cdot c(z)=1} \leq \varepsilon \]

\end{proof}

\end{document}